\PassOptionsToPackage{dvipsnames}{xcolor}

\documentclass{article}

\usepackage{microtype}
\usepackage{graphicx}
\usepackage{booktabs} 

\usepackage{hyperref}



\usepackage[accepted]{icml2025}

\usepackage{mathtools}
\usepackage{amsthm}
\usepackage{bbm}
\usepackage{pifont}
\usepackage{subfig} 

\usepackage[capitalize,noabbrev]{cleveref}

\theoremstyle{plain}
\newtheorem{theorem}{Theorem}[section]
\newtheorem{proposition}[theorem]{Proposition}

\theoremstyle{definition}

\theoremstyle{remark}

\usepackage[textsize=tiny]{todonotes}


\newcommand{\indi}{\mathbbm{1}}
\newcommand{\cS}{\mathcal{S}}
\newcommand{\cD}{\mathcal{D}}
\newcommand{\cN}{\mathcal{N}}

\newcommand{\cA}{\mathcal{A}}

\newcommand{\bbE}{\mathbb{E}}

\newcommand{\bbR}{\mathbb{R}}


\newcommand{\qe}{\text{\tiny query}}

\newcommand{\dit}{{DIT}}

\usepackage{amsmath,amsfonts,bm}


















\def\1{\bm{1}}










\DeclareMathAlphabet{\mathsfit}{\encodingdefault}{\sfdefault}{m}{sl}
\SetMathAlphabet{\mathsfit}{bold}{\encodingdefault}{\sfdefault}{bx}{n}











\newcommand{\KL}{D_{\mathrm{KL}}}



\DeclareMathOperator*{\argmax}{arg\,max}
\DeclareMathOperator*{\argmin}{arg\,min}

\icmltitlerunning{In-Context Reinforcement Learning From Suboptimal Historical Data}

\begin{document}

\twocolumn[
\icmltitle{In-Context Reinforcement Learning From Suboptimal Historical Data}




\begin{icmlauthorlist}
\icmlauthor{Juncheng Dong}{ee}
\icmlauthor{Moyang Guo}{ee}
\icmlauthor{Ethan X. Fang}{ef}
\icmlauthor{Zhuoran Yang}{zy}
\icmlauthor{Vahid Tarokh}{ee}
\end{icmlauthorlist}

\icmlaffiliation{ee}{Department of Electrical and Computer Engineering, Duke University, Durham, US}
\icmlaffiliation{ef}{Department of Biostatistics and Bioinformatics, Duke University, Durham, US}
\icmlaffiliation{zy}{Department of Statistics and Data Science, Yale University,
New Haven, US}
\icmlcorrespondingauthor{Juncheng Dong}{juncheng.dong@duke.edu}

\icmlkeywords{Machine Learning, ICML}

\vskip 0.3in
]



\printAffiliationsAndNotice{}  

\begin{abstract}
Transformer models have achieved remarkable empirical successes, largely due to their in-context learning capabilities. Inspired by this, we explore training an autoregressive transformer for in-context reinforcement learning (ICRL). In this setting, we initially train a transformer on an offline dataset consisting of trajectories collected from various RL tasks, and then fix and use this transformer to create an action policy for new RL tasks. Notably, we consider the setting where the offline dataset contains trajectories sampled from suboptimal behavioral policies. In this case, standard autoregressive training corresponds to imitation learning and results in suboptimal performance. To address this, we propose the \emph{Decision Importance Transformer} (DIT) framework, which emulates the actor-critic algorithm in an in-context manner. In particular, we first train a transformer-based value function that estimates the advantage functions of the behavior policies that collected the suboptimal trajectories. Then we train a transformer-based policy via a weighted maximum likelihood estimation loss, where the weights are constructed based on the trained value function to steer the suboptimal policies to the optimal ones. We conduct extensive experiments to test the performance of DIT on both bandit and Markov Decision Process problems. Our results show that DIT achieves superior performance, particularly when the offline dataset contains suboptimal historical data.
\end{abstract}

\section{Introduction}\label{sec:intro}
Transformer models (TMs) have achieved remarkable empirical successes~\citep{gpt2,openai2024gpt4}. In particular, TMs trained on vast amount of data have shown remarkable in-context learning (ICL) capabilities, solving new supervised learning tasks only with a few demonstrations and without requiring any parameter updates~\citep{brown2020language,akyurek2022learning}. 
Meanwhile, substantial evidence demonstrates that autoregressive TMs excel at solving individual reinforcement learning (RL) tasks, where a TM-based policy is trained and tested on the \emph{same} RL task~\citep{li2023surveytransformersreinforcementlearning}. 
Inspired by these, recent research has explored the use of TMs for in-context RL (ICRL). In this setting, we pretrain TMs on an offline dataset consisting of trajectories collected from a family of different RL tasks. After pretraining, we deploy the pretrained TMs to solve \emph{new and unseen} RL tasks~\citep{AD,DPT}. See Figure~\ref{fig:vis-intro} (a) and (c) for comparisons between standard offline RL and ICRL.
When presented with a \emph{context dataset} containing environment interactions collected by unknown and often suboptimal policies, pretrained TMs predict the optimal actions for current states from the environmental information within the context dataset. See Figure~\ref{fig:vis-intro} for a visual illustration. Two recent works, \emph{Algorithm Distillation }(AD)~\citep{AD} and \emph{Decision Pretrained Transformer} (DPT)~\citep{DPT}, have demonstrated impressive ICRL abilities, inferring near-optimal policies for new RL tasks.

\begin{figure*}[!h]
\vskip 0.2in
\centering
\includegraphics[width=0.95\linewidth]{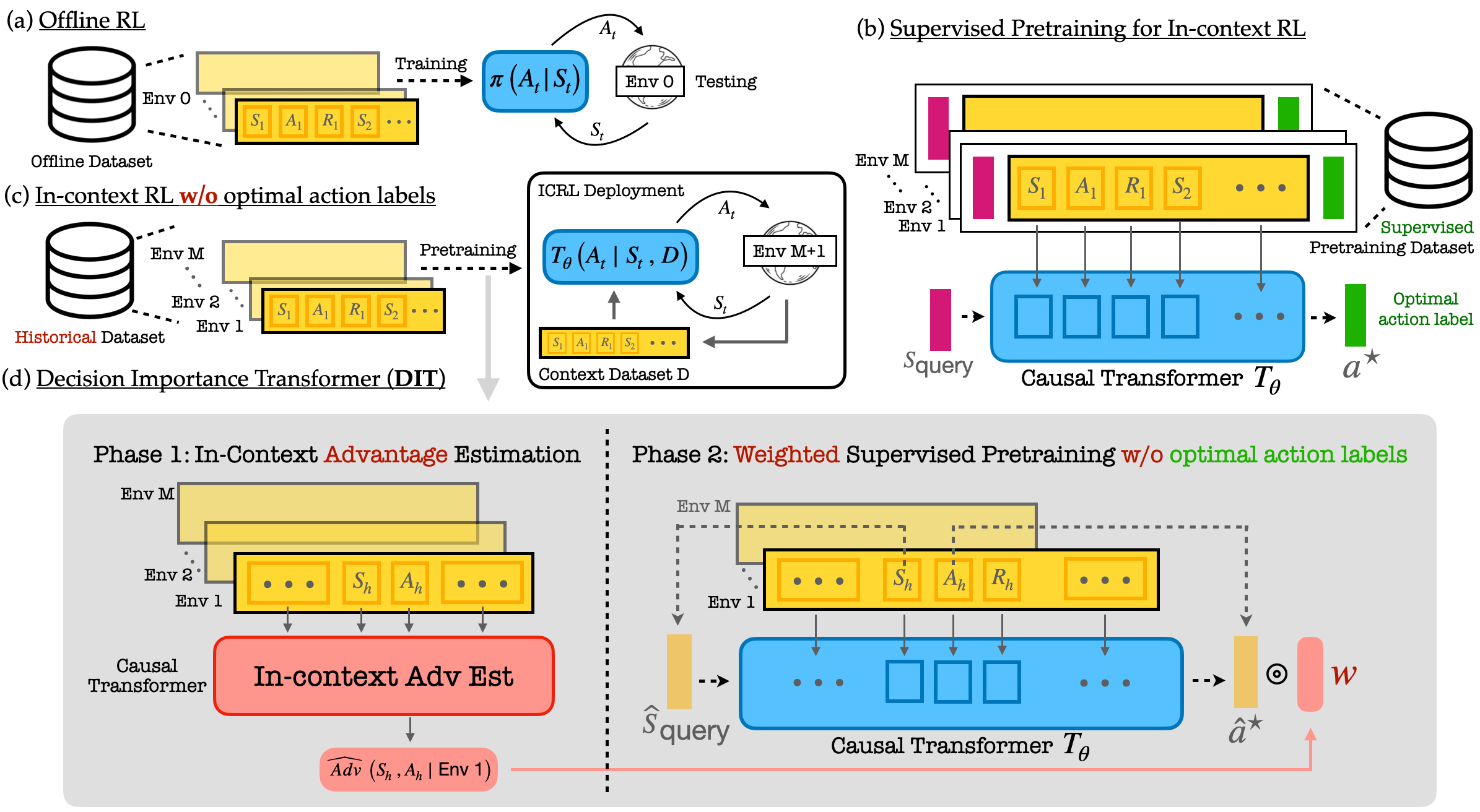}
\caption{\textbf{(a) and (c) Comparison between Offline RL and ICRL.} Standard offline RL trains and tests a policy $\pi$ in the same task (\emph{Env 0}); ICRL pretrains TMs on trajectories collected from a family of different RL tasks (\emph{Env 1, Env 2, \dots, Env M}), and deploys the pretrained TMs to unseen tasks (\emph{Env M+1}). \textbf{ICRL Deployment.} The pretrained TMs generate actions conditioned on the current states and \emph{context datasets} consisting of offline trajectories collected by (suboptimal) behavioral policies from the unseen tasks. 
\textbf{(b) Supervised Pretraining.} Presented with offline trajectories and \emph{optimal action labels}, TMs are pretrained to predict the optimal actions for query states across RL tasks.
\textbf{(c) ICRL from Suboptimal Historical Data.} This work addresses the challenging problem of ICRL without optimal action labels. 
\textbf{(d) Schematic Overview of the Proposed Framework DIT.} Lack of the optimal action labels, the proposed framework employs \emph{in-trajectory} state-action pairs as query states and \emph{pseudo}-optimal action labels, and a \emph{weighted} pretraining objective, where the weights are based on the optimality of actions, estimated by a TM-based in-context advantage function estimator.
}
\label{fig:vis-intro}
\end{figure*}

\textbf{Challenges.} However, existing supervised pretraining approaches focus on training TMs to imitate the actions in the pretraining datasets and thus have \emph{stringent requirements} on the pretraining datasets. For example, 
AD requires the pretraining dataset to capture the learning process of RL algorithms—from episodes generated by randomly initialized policies to those collected by nearly optimal policies—across a wide range of RL tasks; 
DPT requires access to optimal policies to generate a set of optimal action labels for its supervised pretraining of TMs. To overcome these limitations, this work considers pretraining TMs for ICRL \emph{using only suboptimal historical data}. While this presents significant challenges, it also offers substantial potential benefits by significantly improving the feasibility of ICRL, as suboptimal trajectories are far easier to gather. For instance, large companies often maintain extensive databases of historical trajectories from non-expert users.

\textbf{Contributions.} In pursuit of this goal, we introduce  \emph{Decision Importance Transformer} (\textbf{\dit}), a supervised pretraining framework for ICRL using only \emph{historical} trajectories collected by \emph{suboptimal} behavioral policies across distinct RL tasks. When the pretraining datasets contain only suboptimal trajectories, existing approaches correspond to imitation learning and thus result in suboptimal performance. \dit~overcomes this challenge through several techniques:
\begin{itemize}
    \item DIT learns to infer near-optimal actions from suboptimal trajectories through an exponential reweighting technique that assigns \emph{good actions} in the offline dataset with \emph{more weights} during supervised pretraining. These assigned weights guide the suboptimal policies toward the optimal ones.
    \item In particular, the assigned weights are constructed from the advantage functions of the behavior policies such that actions with high advantage values receive more weights during pretraining, leading to \emph{guaranteed policy improvements} over the behavior policies. 
    \item Notably, although advantage weighted regression has been studied in standard RL~\citep{wang2018exponentially,peng2019advantage}, it remains unclear how to generalize this approach to ICRL. The primary challenge is that the weighting function in ICRL must be \emph{task-dependent}, thus requiring the estimation of advantage functions for \emph{all} RL tasks in the pretraining dataset. To this end, the most significant technical difficulty arises from the unknown source tasks of pretraining trajectories, preventing us from grouping trajectories from the same RL tasks to improve estimation. As a result, we must estimate the advantage functions \emph{individually} for each trajectory in the pretraining dataset.
    \item To address this formidable challenge, DIT trains a TM-based advantage estimator that interpolates across trajectories from different tasks for an \emph{in-context estimation} of the advantage functions to facilitate the weighted supervised pretraining framework. See Figure~\ref{fig:vis-intro}(d) for a visualization. 
\end{itemize}

\textbf{Empirical Results.} Through extensive experiments on various bandit and Markov Decision Process (MDP) problems, we demonstrate that pretrained DIT models generalize to unseen decision-making problems. On bandit problems, the performance of DIT models matches that of the theoretically optimal bandit algorithms (e.g., Thompson Sampling~\citep{thompson}). In four challenging MDP problems including two navigating tasks with sparse rewards and two complex continuous control tasks, DIT models achieve superior performance, particularly when the pretraining dataset contains suboptimal trajectories. Notably, in many scenarios, DIT is comparable to DPT in both online and offline testings, despite being pretrained without optimal action labels. 

\section{Related Work}\label{sec:related-work}
\textbf{Offline Reinforcement Learning.} Since we consider pretraining with historical data, our work falls within the broader field of offline RL. While online RL algorithms~\citep{kaelbling1996reinforcement,franccois2018introduction} learn optimal policies by interacting with the environments through trial and error, offline RL~\citep{levine2020offline,matsushima2020deployment,prudencio2023survey} aims to infer optimal policies from historical data collected by (suboptimal) behavioral policies. One of the most substantial challenges for offline RL is the distribution shift caused by the mismatch between behavioral policies and optimal policies~\citep{levine2020offline,kostrikov2021offline}. To this end, offline RL algorithms learn pessimistically by either policy regularization or underestimating the policy returns~\citep{wu2019behavior,kidambi2020morel,kumar2020conservative,rashidinejad2021bridging,yin2021towards,jin2021pessimism,fujimoto2021minimalist,dong2023pasta}. While the goal of offline RL is to solve the \emph{same} RL tasks from where the offline datasets are collected, the goal of ICRL is to efficiently generalize to \emph{unseen} tasks after pretraining with offline datasets from diverse RL tasks. 

\textbf{Transformer Models and Autoregressive Decision Making.} Large language models and autoregressive models~\citep{gpt2,gpt3,chatgpt,Touvron2023Llama2O,openai2024gpt4} have achieved astonishing empirical successes in a wide range of application areas, including medicine~\citep{singhal2023large,thirunavukarasu2023large}, education~\citep{kasneci2023chatgpt}, finance~\citep{Wu2023BloombergGPTAL,yang2023fingpt}, etc. As it is natural to use autoregressive models for sequential decision-making, transformer models have demonstrated superior performance in both bandit and MDP problems~\citep{li2023a,yuan2023transformer}. In particular, Decision Transformer (DT) ~\citep{chen2021decision,zheng2022online,liu2023constrained,yamagata2023q} uses return-conditioned supervised learning to tackle offline RL. Although salable to multi-task settings (i.e., one model for multiple RL problems), DT is commonly criticised for its inability to improve upon the offline datasets and provably sub-optimal in certain scenarios, e.g., environment with high stochasticity~\citep{brandfonbrener2022does,yang2022dichotomy,yamagata2023q}. 
To this end, AD~\citep{AD} uses sequential modeling to emulate the learning process of RL algorithms, i.e., meta-learning~\citep{vilalta2002perspective}. The work most closely related to ours is DPT, a supervised pretraining approach for in-context decision making~\citep{DPT}. DPT trains transformers to predict the optimal action given a query state and a set of transitions. Both AD and DPT have stringent assumptions on the pretraining datasets. Our work overcomes those drawbacks and does not require query to optimal policies nor the learning histories of RL algorithms~\citep{AD,DPT}. 

\section{Preliminary}\label{sec:prelim}
\textbf{Markov Decision Process.} Sequential decision problems can be formulated as Markov Decision Processes (MDPs). An MDP $\tau$ is described by the tuple $(\cS, \cA, P_{\tau}, R_{\tau}, \gamma, \rho_{\tau})$ where $\cS$ is the set of all possible states, $\cA$ is the set of all possible actions, $P_\tau:\cS\times\cA \rightarrow \Delta(\cS)$ is the transition function that describes the distribution of the next state, $R_{\tau}:\cS\times\cA \rightarrow \bbR$ is the reward function, $\gamma \in (0,1)$ is the discounting factor for cumulative rewards, and $\rho_\tau \in \Delta(\cS)$ is the initial state distribution. An agent interacts with the environment $\tau$ as follows. At the initial step $h=1$, an initial state $s_1 \in \cS$ is sampled according to $\rho_\tau$. At each time step $h$, the agent chooses action $a_h \in \cA$ and receives reward $r_h = R_{\tau}(s_h,a_h)$. Then the next state $s_{h+1}$ is generated following $P_\tau(s_h,a_h)$. A policy $\pi: \cS \rightarrow \Delta(\cA)$ maps the current state to an action distribution. Let $G_{\tau}(\pi) = \bbE[\sum_{h=1}^{\infty}\gamma^{h-1}r_h|\pi,\tau]$ denote the expected cumulative reward of $\pi$ for task $\tau$. The goal of an agent is to learn the optimal policy $\pi_\tau^{\star}$ that maximizes $G_{\tau}(\pi)$.

\textbf{Decision-Pretrained Transformer.} Our proposed approach builds upon the model architecture of DPT, which is a supervised pretraining method for TMs to have ICRL capabilities (see Figure~\ref{fig:vis-intro}(b) for its architecture). DPT assumes a set of tasks $\{\tau^i\}_{i=1}^m$ sampled independently from a task distribution~$p_{\tau}$, with each $\tau^i$ as an instance of MDP. For each task $\tau^i$, a context dataset $D^i$ is sampled, consisting of interactions between a behavioral policy and $\tau^i$. That is, $D^i = \{(s^i_h,a^i_h,s^{i}_{h+1}, r^i_h)\}_h$, where $a^i_h$ is chosen by a behavioral policy. 
Additionally, for each task $\tau^i$, a query state $s^i_\qe \in \cS$ is sampled, and an associated optimal action label $a_i^{\star}$ is sampled from $\pi^{\star}_{\tau^i}(s_\qe)$, where $\pi^{\star}_{\tau^i}$ is the optimal policy for $\tau^i$. The complete pretraining dataset is $\cD_{pre}=\{D^i, s^i_\qe, a^{\star}_i \}_{i=1}^m$. 
Let $T_\theta$ denote a causal transformer with parameters $\theta$~\cite{gpt2}. 
The pretraining objective of DPT is defined as 
\begin{equation}\label{eqn:obj-dpt}
\min_{\theta} \frac{1}{m}\sum_{i=1}^m -\log T_\theta\left(a^{\star}_i|s^i_\qe, D^{i}\right).
\end{equation}

\textbf{ICRL Deployment.} After pretraining, the pretrained autoregressive TM $T_\theta$ can be deployed as both an online and offline agent. During deployment, an unseen testing task $\tau$ is sampled from $p_{\tau}$. 
For offline deployment, a dataset $D_{\text{off}}$ is first sampled from $\tau$, e.g., $D_{\text{off}}$ contains trajectories gathered from a behavioral policy in $\tau$, then DPT follows the policy $T_\theta(\cdot|s_h,D_{\text{off}})$ after observing the state $s_h$ at time step $h$ . For online deployment, DPT initiates with an empty dataset $D_{\text{on}}$. In each episode, DPT follows the policy $T_\theta(\cdot|s_h,D_{\text{on}})$ to collect a trajectory $\{s_1, a_1, r_1, \dots, s_H, a_H, r_H\}$ which will be appended into $D_{\text{on}}$. This process repeats for a pre-defined number of episodes. See Algorithm~\ref{algo:deployment} in appendix for the pseudocodes of both offline and online deployments.

\section{Decision Importance Transformer}
Here we introduce our proposed framework \emph{Decision Importance Transformer} ({\bf DIT}). 

\textbf{Pretraining with Suboptimal Data.} Similar to DPT, DIT assumes a family of datasets $\cD = \{D^i\}^m_{i=1}$ where $D^i$ consists of $H$ transitions $\{(s^i_h, a^i_h, s^{i}_{h+1}, r^i_h)\}^{H}_{h=1}$ collected by the (suboptimal) behavioral policy $\pi_{\tau^i}^b$ in task $\tau^i$ which itself is independently sampled from the task distribution $p_{\tau}$. In contrast to DPT, however, DIT \emph{does not require} the set of paired query states and optimal action labels $\{s^i_\qe, a^{\star}_i \}_{i=1}^m$, which are often difficult to obtain in practice. 

\textbf{Notations.} In the sequel, for any task $\tau$, we assume that it has an index (parameter) also denoted by $\tau$ such that the task information $\tau$ can be an explicit input to a meta-policy $\pi(s|a;\tau)$ which can generate distinct policies based on the received task $\tau$. For example, in robotic control tasks, $\tau$ may represent the physical parameters of the robots such as robot mass or the environmental parameters such as ground friction. We use $\pi_{\tau}^b(a|s)$ to denote the \textit{behavioral policy} for task $\tau$. 
Denote 
\begin{align*}
    &V_{\tau}^b(s) =\bbE\bigg[\sum^{\infty}_{h=1}\gamma^{h-1}r_h\big|s_1=s,\tau,\pi_\tau^b\bigg],\\
    &Q_{\tau}^b(s,a) =\bbE\bigg[\sum^{\infty}_{h=1}\gamma^{h-1}r_h\big|s_1=s,a_1=a,\tau,\pi_\tau^b\bigg]
\end{align*}
as its value and action-value functions respectively, and let \(A_{\tau}^b(s,a) = Q_{\tau}^b(s,a) - V_{\tau}^b(s)\) 
be its \textit{advantage function}. 

For presentation clarity, in Section~\ref{sec:WMLE}, we first consider the scenarios where \textbf{(i)} $A_{\tau}^b(s,a)$ is known and \textbf{(ii)} the task index $\tau$ is also known and can be provided as input to a meta-policy. Then in Section~\ref{sec:ICL}, we introduce solutions for scenarios where $A_{\tau}^b(s,a)$ and $\tau$ need to be estimated. All proofs of the theoretical results in this section are deferred to Appendix~\ref{sec:app-theory}.

\subsection{Weighted Maximum Likelihood Estimation}\label{sec:WMLE}
\textbf{Motivation.} To motivate \dit, we first consider the setting of imitation learning where the agent is trained and tested on the same task. Given a dataset of transitions $D = \{(s_h, a_h, s_{h+1}, r_h)\}$ collected by a behavior policy $\pi^b(a|s)$ with advantage function $A^b(s,a)$, \citet{wang2018exponentially} proposes to optimize a weighted objective:
$$
\argmax_{\pi} \sum_{(s_h,a_h) \in D}\exp(A^b(s_h,a_h)) \cdot \log \pi(a_h|s_h).
$$
The rationale is that the good actions in the offline dataset, that is, $a_h$ with high advantage value $A^b(s_h,a_h)$, should be given more weights during imitation learning. These weights essentially work as importance sampling ratios so that the action distribution is closer to the optimal one. 

\textbf{Weighted Pretraining for ICRL.} In contrast to imitation learning that focuses on individual RL tasks, the objective of DIT is to learn a task-conditioned policy $\pi(a|s;\tau)$ with the task index $\tau$ as input. In particular, $\pi(a|s;\tau)$ should perform well for $\tau \sim p_{\tau}$.

Motivated by the aforementioned weighted imitation learning objective, DIT has the following \textit{weighted maximum likelihood estimation} ({\bf WMLE}) loss for pretraining:
\begin{equation}~\label{eqn:loss-dit}
    L(\pi)= -\bbE_{\tau,s,a}\left[\exp\left( \frac{A_{\tau}^b\left(s,a\right)}{\eta} \right)\log \pi(a|s;\tau)\right].  
\end{equation}
The expectation in Equation~\eqref{eqn:loss-dit} is with respect to $\tau \sim p_{\tau}$, $s \sim d_{\tau}(s)$, and $a \sim \pi_{\tau}^b(a|s)$ where 
$d_{\tau}(s)$ is the discounted visiting frequencies of $\pi_{\tau}^b(a|s)$ defined as $d_{\tau}(s) = (1-\gamma)\bbE\left[\sum^{\infty}_{h=1}\gamma^{h-1}\indi\{s_h=s\}|\tau, \pi_{\tau}^b\right]$. The effectiveness of the objective in Equation~\eqref{eqn:loss-dit} is demonstrated by the following result which states that the optimizer to DIT's pretraining objective is also the solution to another policy optimization problem that is easier to interpret. 

\begin{proposition}\label{lemma:equivalence}
Consider the following optimization problem where $\bbE_{\tau,s,a}$ is defined as in Equation~\eqref{eqn:loss-dit} except that $a \sim \pi(a|s;\tau)$, i.e., the action is sampled from the task-conditioned policy rather than the behavioral policies:
\begin{equation}\label{eqn:cpi}
    \max_{\pi} J(\pi) = \bbE_{\tau,s,a}\bigg[\underbrace{A_{\tau}^b\left(s,a\right)}_{\text{(I)}} - \eta\cdot\underbrace{\KL(\pi(\cdot|s;\tau) \| \pi_{\tau}^b(\cdot|s))}_{\text{(II)}}\bigg],
\end{equation}
where $\KL$ is the Kullback–Leibler (KL) divergence, and let $\pi^\star \in \argmax_{\pi}J(\pi)$ be its optimizer. Then we have for any policy $\pi(a|s;\tau)$,
\begin{equation}\label{eqn:kl}
\begin{aligned}
     &\bbE_{\tau \sim p(\tau),s \sim d_{\tau}(s)}\left[\KL\left(\pi^\star(\cdot|s;\tau) \| \pi(\cdot|s;\tau)\right)\right] \\ 
     &= -\bbE_{\tau,s,a}\left[\frac{1}{Z_{\tau}(s)}\exp \big ( A_{\tau}^b(s,a) / \eta  \big) \cdot \log \pi(a|s;\tau)\right] + C,
\end{aligned}
\end{equation}
where $\bbE_{\tau,s,a}$ is defined as in~\eqref{eqn:loss-dit}, $C$ is a constant independent of $\pi$ and $Z_\tau(s) = \sum_{a}\pi_{\tau}^b(a|s)\exp(A_{\tau}^b\left(s,a\right)/\eta)$.
\end{proposition}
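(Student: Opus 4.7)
The plan is to (a) solve Equation~\eqref{eqn:cpi} in closed form for $\pi^\star$, and (b) substitute this form into the left-hand side of Equation~\eqref{eqn:kl} and verify the identity via direct expansion of the KL divergence.

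First, a key observation is that the visitation distribution $d_\tau(s)$ appearing in the outer expectation of $J(\pi)$ is induced by the behavioral policy $\pi_\tau^b$ and hence does \emph{not} depend on the variable $\pi$ being optimized. Consequently, $J(\pi)$ decouples across $(\tau, s)$, and I can maximize it pointwise for each fixed $(\tau, s)$. The inner subproblem is the classical KL-regularized reward maximization
\begin{equation*}
\max_{\pi(\cdot|s;\tau)\in\Delta(\cA)} \; \sum_{a} \pi(a|s;\tau)\,A_\tau^b(s,a) \;-\; \eta\, \KL\!\bigl(\pi(\cdot|s;\tau)\,\|\,\pi_\tau^b(\cdot|s)\bigr).
\end{equation*}
Using a Lagrange multiplier for the simplex constraint (or, equivalently, rewriting the objective up to a $\pi$-independent constant as a negative KL to a Gibbs distribution), I obtain the closed form $\pi^\star(a|s;\tau) = \pi_\tau^b(a|s)\exp(A_\tau^b(s,a)/\eta)/Z_\tau(s)$, with $Z_\tau(s)$ as defined in the statement.

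Next, I would expand the left-hand side of~\eqref{eqn:kl} as
\begin{equation*}
\bbE_{\tau,s}\!\left[\sum_a \pi^\star(a|s;\tau)\log \pi^\star(a|s;\tau)\right] \;-\; \bbE_{\tau,s}\!\left[\sum_a \pi^\star(a|s;\tau)\log \pi(a|s;\tau)\right].
\end{equation*}
The first summand depends only on $\pi^\star$ and is $\pi$-independent; it serves as the constant $C$. For the second summand, I substitute the Gibbs form of $\pi^\star$ and perform a change of measure in the inner sum from $\pi^\star(\cdot|s;\tau)$ back to $\pi_\tau^b(\cdot|s)$, which introduces exactly the factor $\exp(A_\tau^b(s,a)/\eta)/Z_\tau(s)$. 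The resulting expression coincides with the right-hand side of~\eqref{eqn:kl} written in the $a\sim\pi_\tau^b$ convention of~\eqref{eqn:loss-dit}, completing the identity up to the additive constant $C$.

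The bulk of the argument is mechanical; the two points requiring care are (i) the validity of pointwise optimization, which is justified precisely because $d_\tau$ is generated by $\pi_\tau^b$ and is independent of $\pi$, and (ii) the absolute continuity $\pi^\star \ll \pi_\tau^b$ needed for the change of measure, which is automatic from the Gibbs form wherever $\pi_\tau^b(a|s)>0$. Under mild regularity (e.g., $\cA$ finite or compact with bounded advantages) Fubini applies throughout, so no further integrability issues arise, and the proof is complete.
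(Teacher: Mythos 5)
Your proposal is correct and follows essentially the same route as the paper's proof: both derive the closed form $\pi^\star(a|s;\tau)=\pi_\tau^b(a|s)\exp(A_\tau^b(s,a)/\eta)/Z_\tau(s)$ by pointwise reduction of the inner problem to a KL divergence against a Gibbs distribution, and then expand the KL on the left of Equation~\eqref{eqn:kl} and change measure from $a\sim\pi^\star$ to $a\sim\pi_\tau^b$ to produce the weight $\exp(A_\tau^b(s,a)/\eta)/Z_\tau(s)$ and the $\pi$-independent constant $C$. Your added remarks on the validity of pointwise optimization and absolute continuity are sound refinements of the same argument rather than a different approach.
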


In Equation~\eqref{eqn:cpi}, the objective is to find a policy $\pi^\star$ that improves over the behavior policy (by maximizing term (I)) and does not stray too far from the behavior policy (by minimizing term (II)). When the behavioral policy $\pi_{\tau}^b(a|s)$ is near-optimal, $\eta$ should set to a large value so that we can have \emph{safe} improvements over the behavioral policy. On the other hand, when the behavioral policy is highly sub-optimal, $\eta$ should set to a small value so that we have more freedom for policy improvement to decrease the sub-optimality. Note that the $\KL$ constraint (term (II) in Equation~\eqref{eqn:cpi}) is critical for pretraining large transformer models to prevent policy collapse~\citep{schulman2015trust}. 

Comparing Equation~\eqref{eqn:kl} with the pretraining objective of~\dit~in Equation~\eqref{eqn:loss-dit}, we observe that \dit~aims to identify a policy that is closest to $\pi^\star$ by setting $Z_{\tau}(s)=1$ (we provide a brief discussion for why this is valid in Appendix~\ref{sec:zs1}). When $A^b_{\tau}(s,a)$ is known, the pretraining objective of DIT can be estimated with the given pretraining dataset $\cD$ by minimizing the following loss function
\begin{equation}~\label{eqn:sample-loss-dit}
    L_n(\pi) := -\frac{1}{mH}\sum_{i=1}^m\sum_{h=1}^H w_h^i\log\pi\left(a^i_h|s^i_h;\tau^i\right),
\end{equation}
where $w_h^i = \exp\left(A_{\tau^i}^b(s^i_h,a^i_h)/{\eta}\right)$.
Next we establish that \dit~can provably achieve policy improvement. 
\begin{proposition}[Policy Improvement]\label{prop:policy-improvement} 
Let $\pi^\star$ be the policy that optimizes~\eqref{eqn:cpi}. For any task $\tau$ and policy $\pi$, let 
$G_{\tau}(\pi) = \bbE[\sum_{h=0}^{\infty}\gamma^{h}r_h|\pi,\tau]$
represent the expected cumulative reward of $\pi$ for $\tau$. Let $\pi_\tau^\star$ denote $\pi^\star(a|s;\tau)$.
Then we have
\begin{equation}\label{eq:policy_improvement}
    \begin{aligned}
        &\bbE_{\tau \sim p_\tau}[G_{\tau}(\pi_\tau^\star)-G_{\tau}(\pi_\tau^b)]  
        \ge \frac{\eta}{1-\gamma}\bbE_{\tau\sim p_\tau}[C_\tau^D]\\
        &\quad\quad-\frac{2\gamma}{(1-\gamma)^2}\bbE_{\tau \sim p_\tau}\Big[C^A_\tau \cdot \sqrt{C_\tau^D/2}\Big],
    \end{aligned} 
\end{equation}
where $C_\tau^D = \bbE_{s \sim d_{\tau}(s)}[\KL(\pi^\star(\cdot|s;\tau)\|\pi_\tau^b(\cdot|s))]$ and $C^A_\tau = \max_{s}|\bbE_{a \sim \pi^\star(a|s;\tau)}A_{\tau}^b(s,a)|$.
\end{proposition}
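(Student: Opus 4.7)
The natural strategy is to combine the performance difference lemma with an optimality argument that invokes the variational characterization of $\pi^\star$ from Proposition~\ref{lemma:equivalence}, and then absorb the unavoidable distribution-shift error via Pinsker's inequality. Fix a task $\tau$ and write $\pi_\tau^\star = \pi^\star(\cdot\mid \cdot;\tau)$. By the Kakade--Langford performance difference lemma,
\begin{equation*}
G_\tau(\pi_\tau^\star) - G_\tau(\pi_\tau^b) = \frac{1}{1-\gamma}\,\bbE_{s\sim d_\tau^\star}\bbE_{a\sim \pi_\tau^\star(\cdot\mid s)}\big[A_\tau^b(s,a)\big],
\end{equation*}
where $d_\tau^\star$ is the discounted state-visitation measure of $\pi_\tau^\star$ in task $\tau$. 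The problem is that Proposition~\ref{lemma:equivalence} only controls what happens under the behavior distribution $d_\tau$, not under $d_\tau^\star$, so I would first rewrite the right-hand side as an expectation over $d_\tau$ plus a correction and bound each piece separately.

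For the main term, I would use the fact that $\pi_\tau^b$ is itself feasible in~\eqref{eqn:cpi} with objective value zero (since $\bbE_{a\sim\pi_\tau^b}[A_\tau^b(s,a)]=0$ and $\KL(\pi_\tau^b\|\pi_\tau^b)=0$), hence by optimality of $\pi^\star$,
\begin{equation*}
\bbE_{s\sim d_\tau}\bbE_{a\sim \pi_\tau^\star(\cdot\mid s)}\big[A_\tau^b(s,a)\big] \;\ge\; \eta\cdot \bbE_{s\sim d_\tau}\big[\KL(\pi_\tau^\star(\cdot\mid s)\|\pi_\tau^b(\cdot\mid s))\big] \;=\; \eta\, C_\tau^D.
\end{equation*}
For the correction term, I would bound it using the definition of $C_\tau^A$:
\begin{equation*}
\Big|\bbE_{s\sim d_\tau^\star}\bbE_{a\sim \pi_\tau^\star}[A_\tau^b(s,a)] - \bbE_{s\sim d_\tau}\bbE_{a\sim \pi_\tau^\star}[A_\tau^b(s,a)]\Big| \;\le\; C_\tau^A \cdot \|d_\tau^\star - d_\tau\|_1.
\end{equation*}

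To control $\|d_\tau^\star - d_\tau\|_1$ I plan to use the standard visitation-distance inequality (a telescoping argument over the Markov chain): $\|d_\tau^\star - d_\tau\|_1 \le \frac{2\gamma}{1-\gamma}\,\bbE_{s\sim d_\tau}[D_{\mathrm{TV}}(\pi_\tau^\star(\cdot\mid s),\pi_\tau^b(\cdot\mid s))]$. Applying Pinsker's inequality pointwise and then Jensen's inequality (concavity of $\sqrt{\cdot}$) under the outer expectation converts this into
\begin{equation*}
\|d_\tau^\star - d_\tau\|_1 \;\le\; \frac{2\gamma}{1-\gamma}\sqrt{\tfrac{1}{2}\,\bbE_{s\sim d_\tau}\big[\KL(\pi_\tau^\star(\cdot\mid s)\|\pi_\tau^b(\cdot\mid s))\big]} \;=\; \frac{2\gamma}{1-\gamma}\sqrt{C_\tau^D/2}.
\end{equation*}
Combining the three displays yields the per-task bound $G_\tau(\pi_\tau^\star)-G_\tau(\pi_\tau^b) \ge \tfrac{\eta}{1-\gamma}C_\tau^D - \tfrac{2\gamma}{(1-\gamma)^2}C_\tau^A\sqrt{C_\tau^D/2}$, and taking expectation over $\tau\sim p_\tau$ gives~\eqref{eq:policy_improvement}.

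The main technical obstacle I anticipate is the distribution mismatch between $d_\tau^\star$ (dictated by the performance difference lemma) and $d_\tau$ (the distribution under which Proposition~\ref{lemma:equivalence} gives a clean guarantee); everything else is either an optimality comparison against the trivial feasible point $\pi_\tau^b$ or a standard Pinsker/Jensen step. A secondary subtlety is that Proposition~\ref{lemma:equivalence} characterizes $\pi^\star$ as the minimizer of a reverse-KL projection of $\pi_\tau^b\exp(A_\tau^b/\eta)/Z_\tau$, so I would need to briefly verify that this same $\pi^\star$ indeed achieves the $\ge \eta C_\tau^D$ lower bound in~\eqref{eqn:cpi}; this follows by plugging $\pi^\star$ back into the objective and using that any feasible policy (in particular $\pi_\tau^b$) yields at most its optimum.
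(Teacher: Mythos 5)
Your proposal is correct and takes essentially the same route as the paper: where the paper invokes Corollary~1 of Achiam et al.\ (2017) as a black box to obtain
$G_\tau(\pi^\star_\tau)-G_\tau(\pi^b_\tau)\ \ge\ \tfrac{1}{1-\gamma}\,\bbE_{s\sim d_\tau,\,a\sim\pi^\star_\tau}[A^b_\tau(s,a)]\ -\ \tfrac{2\gamma C^A_\tau}{(1-\gamma)^2}\,\bbE_{s\sim d_\tau}\lVert\pi^\star_\tau(\cdot|s)-\pi^b_\tau(\cdot|s)\rVert_{TV}$,
you simply re-derive that surrogate bound from the performance difference lemma together with the standard visitation-shift inequality, which is exactly the content of the cited corollary. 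All remaining steps — the optimality comparison of $\pi^\star$ against the feasible point $\pi^b_\tau$ yielding $\bbE_{s\sim d_\tau,\,a\sim\pi^\star_\tau}[A^b_\tau(s,a)]\ge\eta\,C^D_\tau$, the Pinsker-plus-Jensen conversion of the TV term into $\sqrt{C^D_\tau/2}$, and the final expectation over $\tau\sim p_\tau$ — coincide with the paper's proof, and your constants match.
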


{In particular, when the magnitude of the advantage function $A_\tau^b$ is small, the right-hand side of Equation~\eqref{eq:policy_improvement} is nonnegative. In this case, the policy $\pi^\star$ obtained by solving Equation~\eqref{eqn:cpi} is strictly better than the behavior policy. 
Equivalently, adding the exponential weights in Equation~\eqref{eqn:sample-loss-dit} is strictly better than vanilla imitation learning, when the total number of pretraining tasks $m$ is large. 
}

\subsection{In-context Task Identification and Advantage Function Estimation}\label{sec:ICL}
However, two key challenges remain:  \textbf{(i)} During \emph{pretraining}, the advantage function $A_{\tau}^b(s,a)$ is not accessible for generating the weights required by the WMLE loss; \textbf{(ii)} During \emph{deployment}, the task index $\tau$ is not accessible as only a context dataset $D_{\tau}$ is presented. In other words, the true identity of the testing task $\tau$ is unknown.

\textbf{In-context Task Identification.} To address the second problem, we follow DPT to instantiate $\pi(a|s;\tau)$ with an autoregressive transformer $T_{\theta}$ parameterized by $\theta$. Conditioned on a given context dataset $D_\tau$ consisting of environment interactions collected by a behavioral policy in $\tau$, the TM-based policy $T_{\theta}(a|s,D_{\tau})$ first implicitly extracts task information about $\tau$ from the context $D_\tau$ and chooses an action based on the extracted task information (see~\citet{DPT} for a detailed discussion). During pretraining, $T_{\theta}$ learns to extract useful task information for the pretraining tasks $\{\tau^i\}^m_{i=1}$ conditioned on the pretraining context datasets $\{D^i\}^m_{i=1}$, and generalizes to unseen tasks during testing.  

\textbf{In-context Advantage Function Estimation.} The first problem is more critical. Given that during pretraining the context dataset $D^i$ may contain up to several trajectories for each task $\tau^i$ in the setting of ICRL, \emph{estimation of  $A_{\tau^i}^b(s,a)$ based on $D^i$ alone can be unreliable}. 
To this end, in the same spirit of ICRL, we propose to use an 
\emph{in-context advantage function estimator} $\widehat{A}_{b}(s^i_h, a^i_h|\tau^i)$ to estimate the advantage value of any state-action pair $(s^i_h, a^i_h)$ in the pretraining dataset $\cD$. Specifically, $\widehat{A}_{b}(s^i_h, a^i_h|\tau^i)$ is implemented by two TMs:
\begin{equation}\label{eqn:in-context-A}
    \widehat{A}_{b}(s^i_h, a^i_h|\tau^i) = \widehat{Q}_{\zeta}(s^i_h,a^i_h|D_Q^{i,h}) - \widehat{V}_{\phi}(s_h^i|D_V^{i,h}),
\end{equation}
where $\widehat{V}_{\phi}$ and $\widehat{Q}_{\zeta}$ are two transformers, parameterized by $\phi$ and $\zeta$, acting as the in-context value and action value estimators that interpolate across tasks to have an improved estimation. 

\textbf{Model Architecture.} Let $G^i_h= \sum_{h'=h}^H \gamma^{h'-h}r_h^i$ be the in-trajectory discounted cumulative reward starting from step $h$. For any observed state-action pair $(s^i_h,a^i_h)$ in the pretraining dataset, $\widehat{Q}_{\zeta}(s^i_h,a^i_h|D_Q^{i,h})$ and $\widehat{V}_{\phi}(s^i_h|D_V^{i,h})$ estimate the action-value 
function $Q_{\tau^i}^b(s^i_h,a^i_h)$ and value function $V_{\tau^i}^b(s^i_h)$ respectively, conditioned on the histories of transitions $D_Q^{i,h}=\{(s_j^i, a^i_j, G^i_j)\}^{h-1}_{j=1}$ and $D_V^{i,h}=\{(s_j^i, G^i_j)\}^{h-1}_{j=1}$, where we employ $\{G^i_j\}_{j<h}$ as the noisy labels for value functions to facilitate in-context learning. See Figure~\ref{fig:value-transformer} for their visual representations. 

\textbf{Training.} We train $\widehat{V}_{\phi}$ and $\widehat{Q}_{\zeta}$ with the following objective function:
\begin{equation}\label{eqn:loss-value-tf}
    \min_{\phi,\zeta}L_A(\phi,\zeta) := L_{\text{\tiny reg}}(\phi,\zeta) + \lambda\cdot \left(L_V^B(\phi)+L_Q^B(\zeta)\right),
\end{equation}
where $\lambda >0$ is a hyperparameter to balance 
\begin{align*}
    &L_{\text{\tiny reg}}(\phi,\zeta) := \frac{1}{mH}\sum_{i=1}^m\sum_{h=1}^H \left(\widehat{V}_{\phi}(s^i_h|D_V^{i,h})-G^i_h\right)^2 \\
    &\quad\quad+ \left(\widehat{Q}_{\zeta}(s^i_h,a^i_h|D_Q^{i,h})-G^i_h\right)^2,\\
    &L_Q^B(\zeta) := \frac{1}{mH}\sum_{i,h}\left(\widehat{Q}_h^i(\zeta)-\widehat{Q}_{\zeta}(s^i_{h+1},a^i_{h+1}|D_Q^{i,h+1})\right)^2\; \\
    &\text{where}\quad \widehat{Q}_h^i(\zeta) := r_h^i+\gamma \widehat{Q}_{\zeta}(s^i_h,a^i_h|D_Q^{i,h}),\quad\text{and} \\
    &L_V^B(\phi) :=  \frac{1}{mH}\sum_{i,h} \left(\widehat{V}_h^i(\phi)-\widehat{V}_{\phi}(s^i_{h+1}|D_V^{i,h+1})\right)^2\\
    &\text{where}\quad \widehat{V}_h^i(\phi)=r_h^i+\gamma \widehat{V}_{\phi}(s^i_h|D_V^{i,h}).
\end{align*}
Here, 
$L^B_Q$ and $L^B_V$ regularize the transformer models with the Bellman equations for value functions.

\textbf{DIT with In-context Advantage Estimator.} After training, with $\widehat{A}_{b}(s^i_h,a^i_h|\tau^i)$ defined in Equation~\eqref{eqn:in-context-A} as an estimation of the true advantage function, we can now optimize the objective function of DIT to have the pretrained TM $T_{\theta^\star}$ for ICRL, i.e., 
\begin{equation}~\label{eqn:sample-loss-dit-with-ab}
    \theta^\star \in \argmin_{\theta \in \Theta} - \frac{1}{mH}\sum_{i,h} w_h^i\log T_\theta\left(a^i_{h}|s^i_{h}, D^{i}\right),
\end{equation}
where \(w_h^i = \exp(\widehat{A}_b(s^i_h,a^i_h|\tau^i)/\eta)\).
We summarize the complete procedure of \dit~in Algorithm~\ref{alg:dit}.

\section{Experiments}
We empirically demonstrate the efficacy of DIT through experiments on various bandit and MDP problems. In bandit problems, DIT showcases matching performance to that of the theoretically optimal bandit algorithms in both online and offline settings. In MDP problems, we corroborate that DIT can infer close-to-optimal policies from suboptimal pretraining datasets. Notably, albeit without optimal action labels during pretraining, DIT models demonstrate performance as strong as that of DPT, which has access to optimal action labels during pretraining. 

\textbf{Implementation.} We follow~\citet{DPT} to choose GPT-2~\citep{gpt2} as the backbone for $T_{\theta}$, $\widehat{Q}_\xi$, and $\widehat{V}_\phi$ due to limited computation resource, and note that the performance may be further improved with larger models. We set $\gamma=0.8$ for all tasks. We choose $\eta=1$ for all tasks. Due to space constraint, see Appendix~\ref{sec:app-model-detail} for more details.

\begin{figure*}[h]
    \centering
    \includegraphics[width=0.90\linewidth]{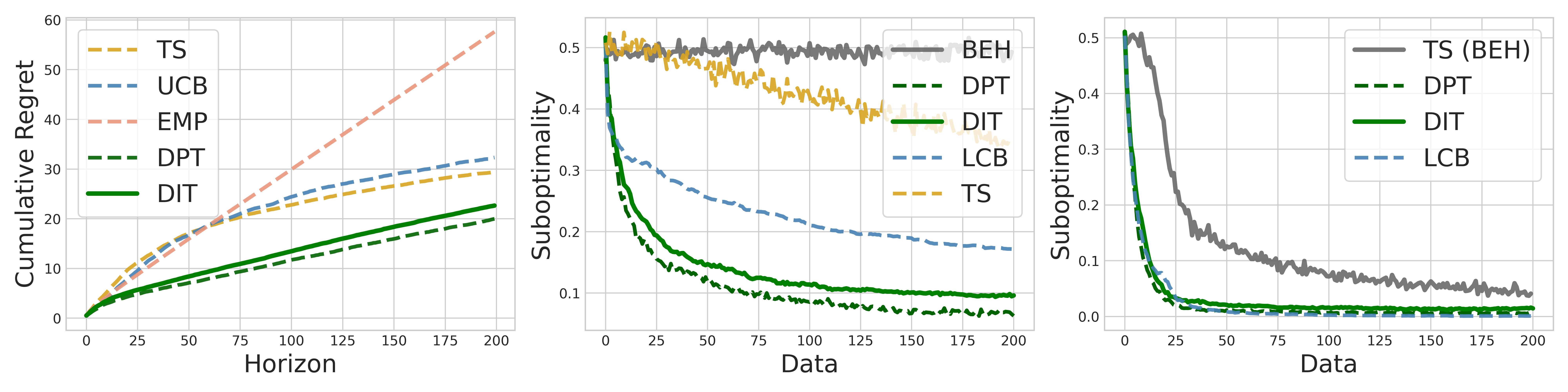}
    \caption{Results for Linear Bandits (lower values indicate better performance). \textbf{Left}: Online testing. \textbf{Middle}: Offline testing conditioned on trajectories gathered by highly suboptimal, randomly generated policies. \textbf{Right:} Offline testing conditioned on trajectories gathered by experts.}
    \label{fig:exp-LB}
\end{figure*}

\subsection{Bandit Problems}
We consider linear bandit (LB) problems with an underlying structure shared among tasks. Specifically, there exists a bandit feature function $\phi: \cA \rightarrow \bbR^d$ that is \emph{fixed} across tasks where $d$ denotes the dimension of linear bandit problems. The reward of a bandit $a \in \cA$ in task $\tau^i$ is $r^i(a) \sim \cN\left(\mu^i_a , \sigma^2\right)$ where $\mu^i_a = \bbE[r|a,\tau^i] = \langle\theta^i, \phi(a)\rangle$ and $\sigma^2=0.3$. Here, $\theta^i$ is the task-specific parameter that defines task $\tau^i$. 
We conduct experiments on LB problems where $K=20$, $d=10$ and $H=200$. 
The pretraining dataset for DIT are generated as follows. 

\textbf{Pretraining Dataset.} For LB problems, we generate the feature function $\phi:\cA \rightarrow \bbR^d$ by sampling bandit features from independent Gaussian distributions, i.e., $\phi(a) \sim \cN_d\left(0,I_d/d\right)$ for all $a \in \cA$. To generate the pretraining tasks $\{\tau^i\}$, we sample their parameters $\{\theta^i\}$ independently following $\theta^i \sim \cN_d\left(0,I_d/d\right)$. To generate context dataset $D^i$, we randomly generate a behavioral policy by mixing (i) a probability distribution samples a Dirichlet distribution and (ii) a point-mass distribution on one random arm. The mixing weights are uniform sampled from $\{0.0, 0.1, \dots, 1.0\}$. At every time step $h$, the behavioral policy samples an action $a^i_h$ and receives $r^i_h$. \emph{We do not enforce extra coverage of the optimal actions for bandit problems}. Following the setting of DPT~\citep{DPT}, we collect $100$k context datasets for LB problems.

\textbf{Comparisons.} We compare to the following baselines (see Appendix~\ref{sec:app-baselines} for more details): \textbf{\emph{Empirical Mean (EMP)}} selects the bandit with the highest average reward; \textbf{\emph{Upper Confidence Bound (UCB)}}~\citep{ucb} builds upper confidence bounds for all bandits and selects the bandit with the highest upper bound; \textbf{\emph{Lower Confidence Bound (LCB)}}~\citep{xiao2021optimality} builds lower confidence bounds for all bandits and selects the bandit with the highest lower bound; \textbf{\emph{Thompson Sampling (TS)}}~\citep{thompson} builds a posterior distribution for the rewards of all bandits and selects the bandit with the highest sampled mean.
In terms of metrics, for offline learning, we follow the convention to use the \emph{suboptimality} defined as $(\mu_{a^{\star}}-\mu_{\hat{a}})$ where $\mu_{a^{\star}}$ is the mean reward of the optimal bandit and $\mu_{\hat{a}}$ is the mean reward of the chosen bandit; for online learning we use the \emph{cumulative regret} defined as $\sum_{h}(\mu_{a^{\star}}-\mu_{a_h})$ where $a_h$ is the chosen action at time step $h$.

\textbf{Empirical Results.} In Figure~\ref{fig:exp-LB}, 
DIT models demonstrate superior performance in the online setting to those of the theoretically optimal bandit algorithms, i.e., UCB and TS. Deployed for unseen bandit problems, DIT models quickly identify the optimal bandits at the beginning and maintain low regrets over the horizon. In the offline setting, DIT can infer near-optimal bandits from trajectories collected by sub-optimal policies. When the behavioral policies (captioned as \emph{BEH}) are randomly generated policies, \emph{DIT significantly outperforms both TS and LCB}, the theoretically optimal algorithm for bandit problems. When the context is collected by expert policies, \emph{DIT models further improve upon their performance}. We also observe that DIT is slightly outperformed by DPT. This is expected given that DPT uses the optimal bandit information during pretraining. However, the loss curves of DPT and DIT demonstrate similar trend, showcasing the effectiveness of DIT's weighted pretraining.

\subsection{MDP Problems}
\textbf{Environments.} We conduct experiments on four challenging MDP environments: two navigating tasks with sparse reward Dark Room~\cite{AD} and Miniworld~\cite{MinigridMiniworld23}, as well as two complex continuous-control tasks Meta-world (reach-v2)~\cite{yu2020meta} and Half-Cheetah (velocity)~\cite{todorov2012mujoco}.
In \textbf{\emph{Dark Room}}, the agent is randomly placed in a room of $10 \times 10$ grids with an \emph{unknown} goal location on one of the grid. The agent needs to move to the goal location by choosing from $5$ actions in $100$ steps. In \textbf{\emph{Miniworld}}, the agent is placed in a room and receives a $(25\times25\times3)$ color image and its direction as input. It can choose from four possible actions to reach a target box, out of four boxes of different colors.
In \textbf{\emph{Meta-World}}, the task is to control a robot hand to reach a target position in $3$D space. 
In \textbf{\emph{Half-Cheetah}}, the agent controls a robot to reach a target velocity, which is uniformly sampled from the interval $[0,3]$, and is penalized based on how far its current velocity is from the target velocity. 
See Appendix~\ref{sec:app-envs} for more details. 

\textbf{Pretraining Datasets}. For Dark Room and Miniworld, 
to ensure coverage of optimal actions (so that optimal policies can be inferred), 
at every step, with probability $p$ (respectively $1-p$) we use optimal policy (respectively random policy) to choose action. We choose $p$ so that {the average reward of the trajectories in the pretraining dataset is less than $30\%$ of that of the optimal trajectories. For Meta-World and Half-Cheetah, we construct the pretraining datasets using historical trajectories generated by \emph{Soft Actor Critic} (SAC). Specifically, SAC is trained until convergence for each task, then we sample from its learning trajectories to build the dataset. Our SAC model training follows the settings outlined in \citet{haarnoja2018soft}. See Appendix~\ref{sec:app-pretraining} for details.

\begin{figure*}[h]
    \centering
    \subfloat[Online Testing]{%
        \includegraphics[width=0.32\linewidth]{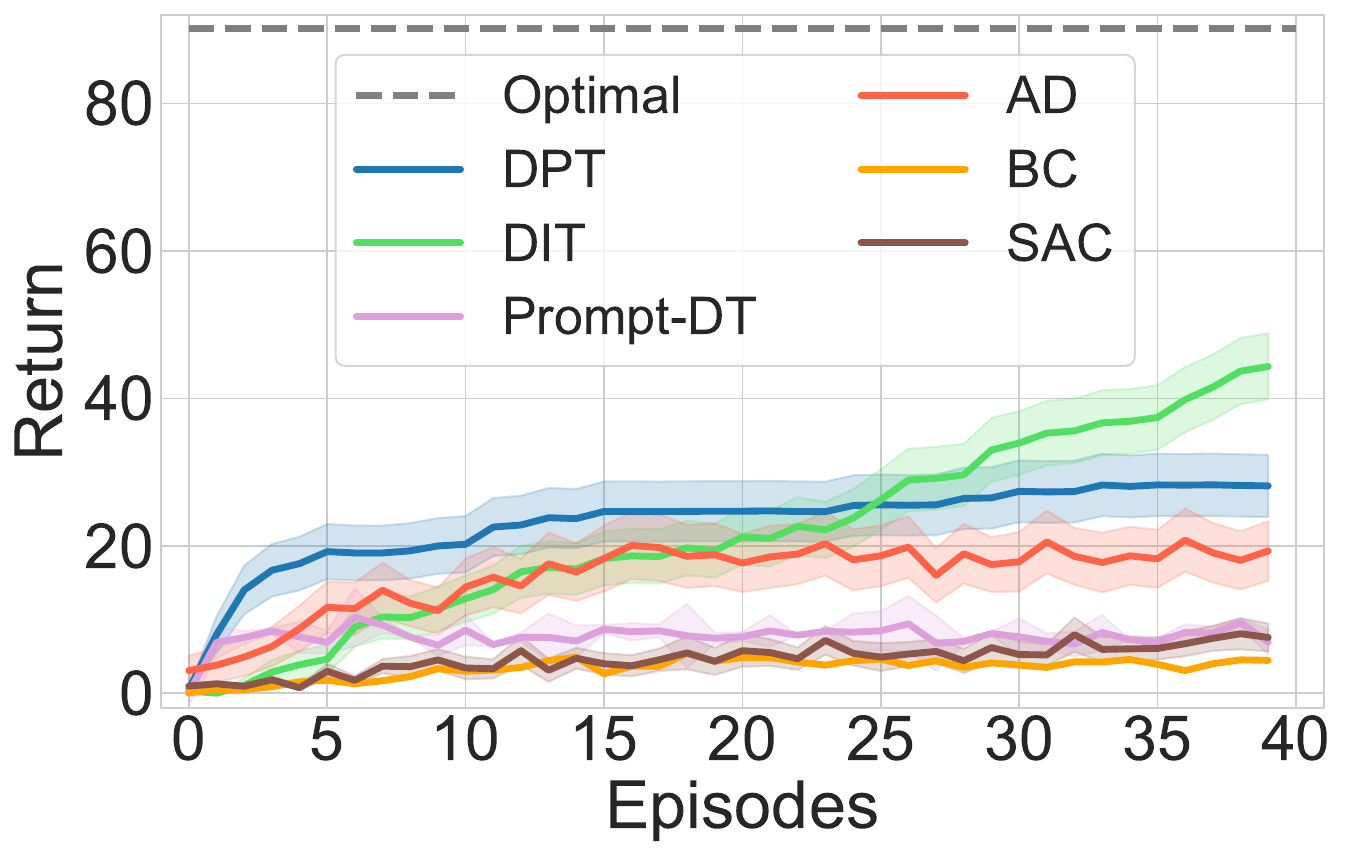}%
        \label{fig:online_darkroom}
    }
    \hfill
    \subfloat[Offline Random]{%
        \includegraphics[width=0.32\linewidth]{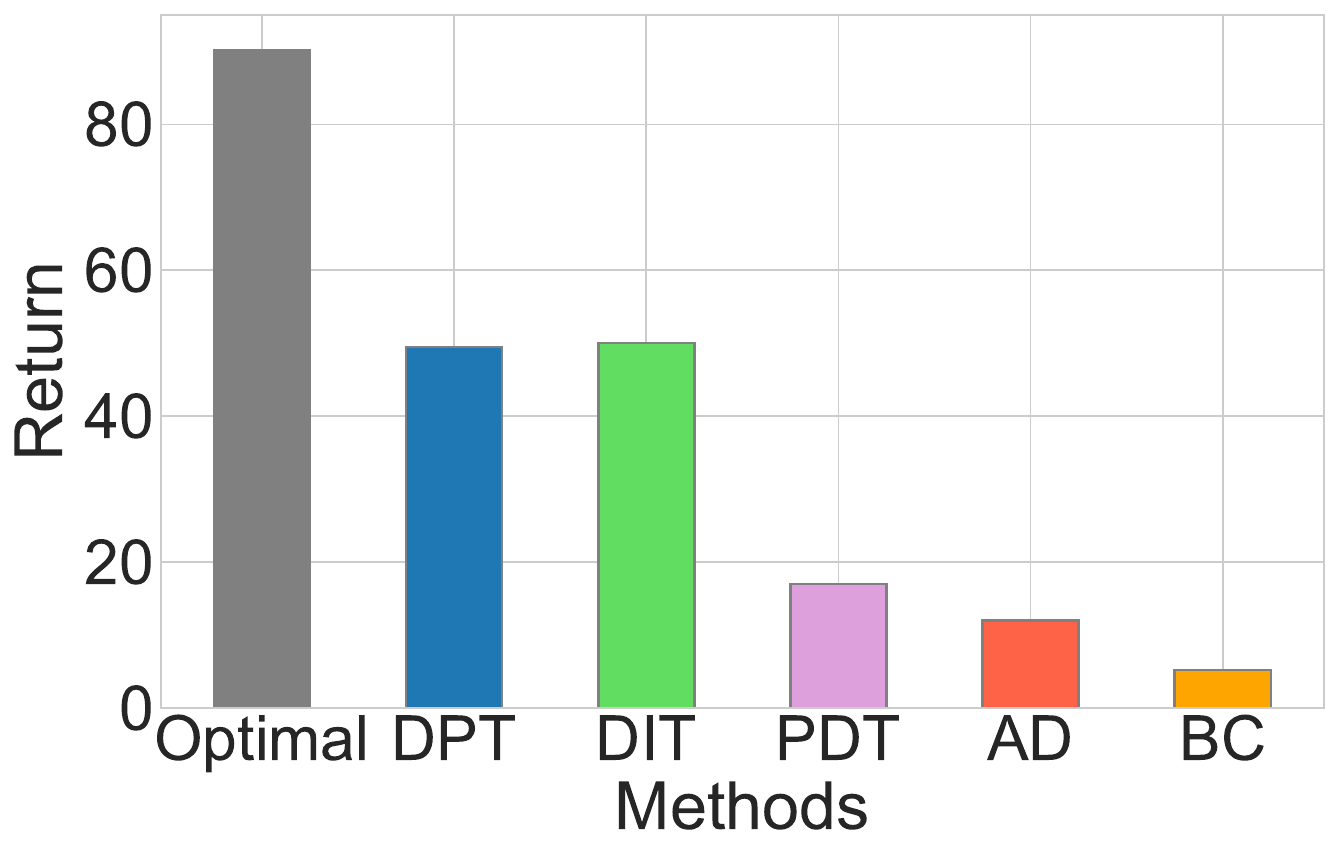}%
        \label{fig:offline_darkroom_random}
    }
    \hfill
    \subfloat[Offline Expert]{%
        \includegraphics[width=0.32\linewidth]{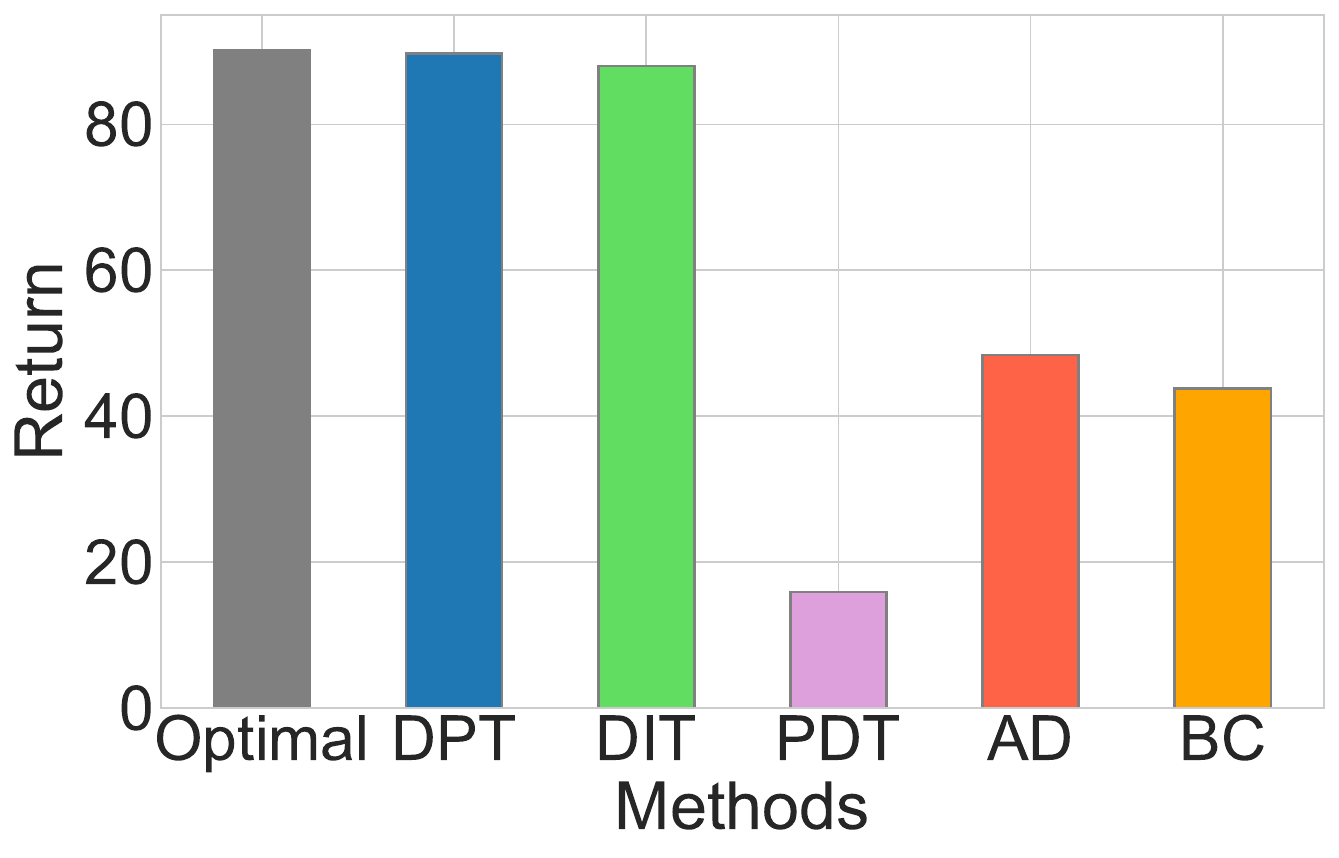}%
        \label{fig:offline_darkroom_expert}
    }

    \caption{Results on \textbf{Dark Room} (higher values indicate better performance). \textbf{(a)}: Change in return of policies with additional online episodes for (in-context) learning. \textbf{(b)} and \textbf{(c)}: Offline evaluations with context trajectories sampled from random and expert policies.}
    \label{fig:miniworld_subplots}
\end{figure*}

\textbf{Comparisons.} We compare \textbf{DIT} to other in-context algorithms as well as RL algorithms without pretraining. The baseline algorithms are briefly described next (see their implementation details in Appendix~\ref{sec:app-baselines}).
\begin{itemize}
    \item \textbf{Soft Actor Critic (SAC)}~\cite{haarnoja2018soft}: SAC is an online RL algorithm that trains an agent from scratch in every environment. 
    \item \textbf{Algorithm Distillation (AD)}: AD is a sequence modeling-based approach for ICRL that emulates the learning process of RL algorithms~\citep{AD}. To this end, AD requires the pretraining dataset to consist of complete learning histories of an RL algorithm
    —from episodes generated by randomly initialized policies to those collected by nearly optimal policies—
    across a wide range of RL tasks. In this work we use SAC as the RL algorithm for AD to emulate.
    \item \textbf{Decision Pretrained Transformer (DPT)}: DPT and DIT use the \emph{same} context datasets for pretraining\footnote{Note that the context datasets in our experiments are collected using suboptimal policies, as opposed to the uniformly random policies used by DPT in~\citet{DPT}. As a result, the reported performance of DPT in the Dark Room environment differs from that in the original paper.}. However, DPT requires query states and their associated optimal action labels across different tasks. 
    \item \textbf{Prompt-DT (PDT)}: PDT is a Decision Transformer-based approach~\citep{xu2022prompting}, which leverages the transformer’s prompt framework for few-shot adaptation. PDT uses the same pretraining dataset as DIT. Thus, \emph{the performance gain of DIT over PDT highlights the effectiveness of DIT's design}. 
    \item \textbf{Behavior Cloning (BC)}: We include an variation of DIT without the exponential reweighting. This approach closely imitates BC, with the following pretraining objective: 
    $$ \min_{\theta} \frac{1}{mH}\sum_{i=1}^m\sum_{h=1}^H -\log T_\theta\left(a^i_h|s^i_h,D^i\right).$$
\end{itemize}
In particular, \textbf{AD} and \textbf{DPT} require \emph{extra information} during pretraining: AD requires the complete learning history of RL algorithms while DPT requires optimal action labels. Given that DIT only relies on suboptimal historical data, the comparison is \emph{inherently unfair}. 
Notably, despite these disadvantages,  DIT outperforms AD and matches with DPT in most scenarios. 
In terms of metrics, we follow the convention to use the \emph{episode cumulative return} $\sum_{h=1}^H r_h$.

\begin{figure*}[h]
  \centering
    \includegraphics[width=0.32\linewidth]{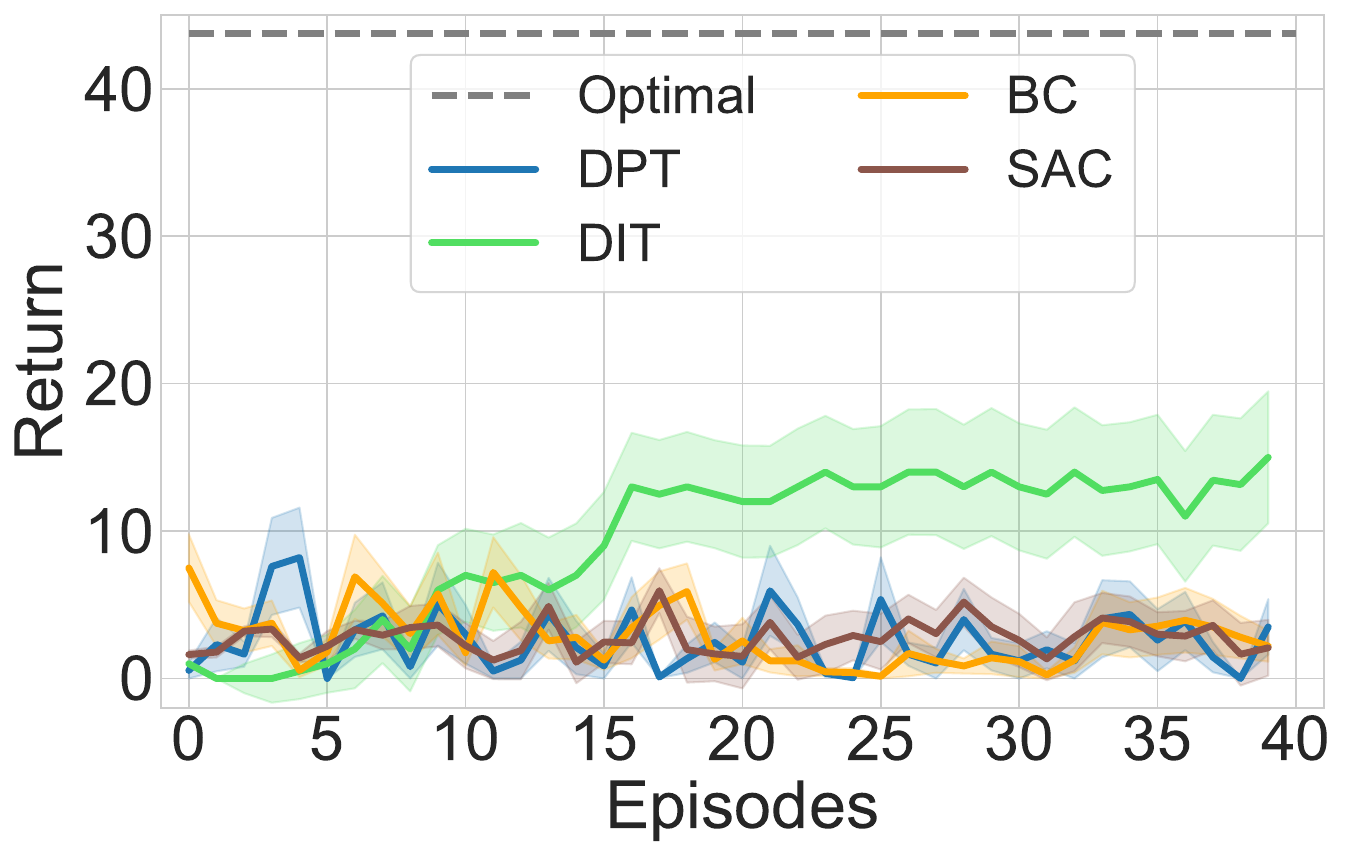}
    \hfill
    \includegraphics[width=0.32\linewidth]{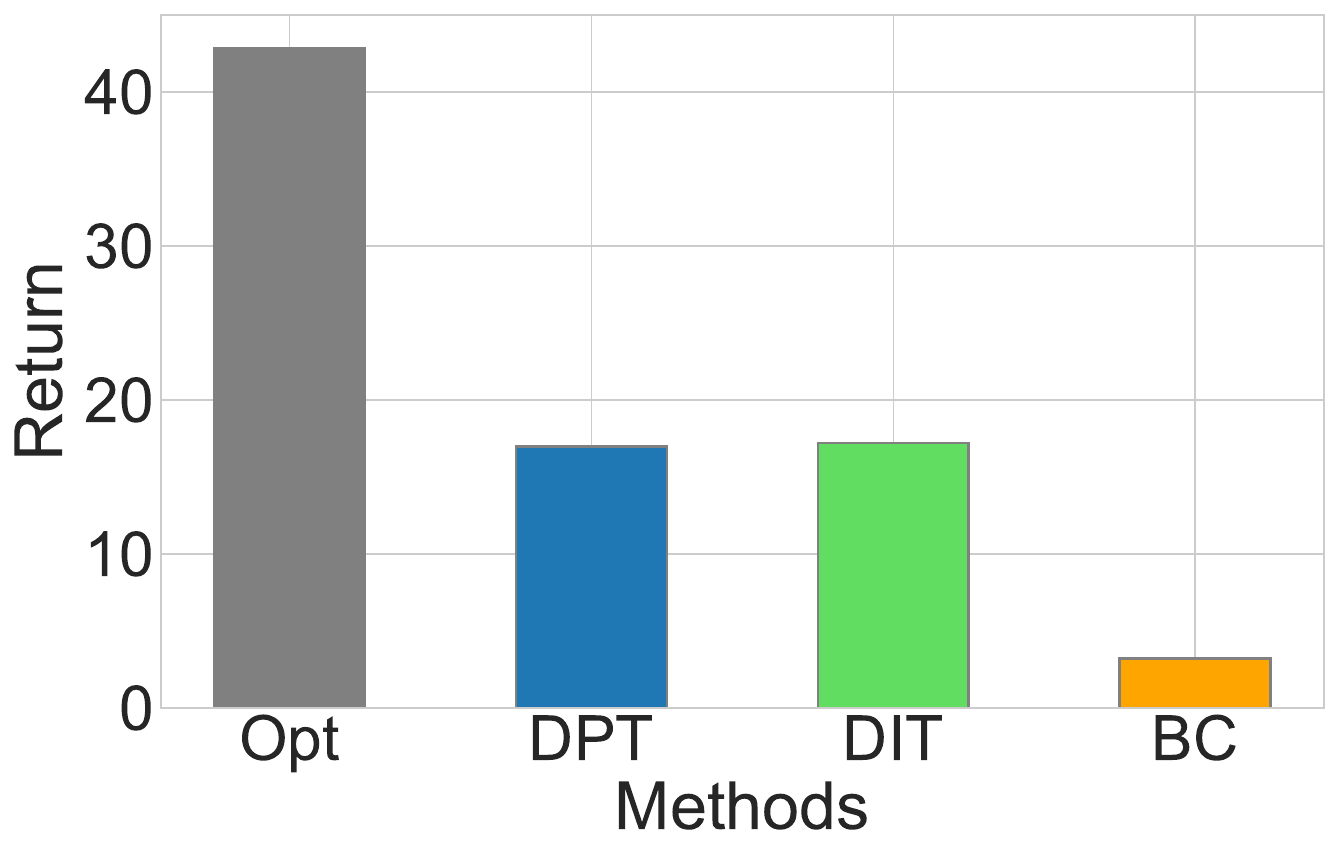}
    \hfill
    \includegraphics[width=0.32\linewidth]{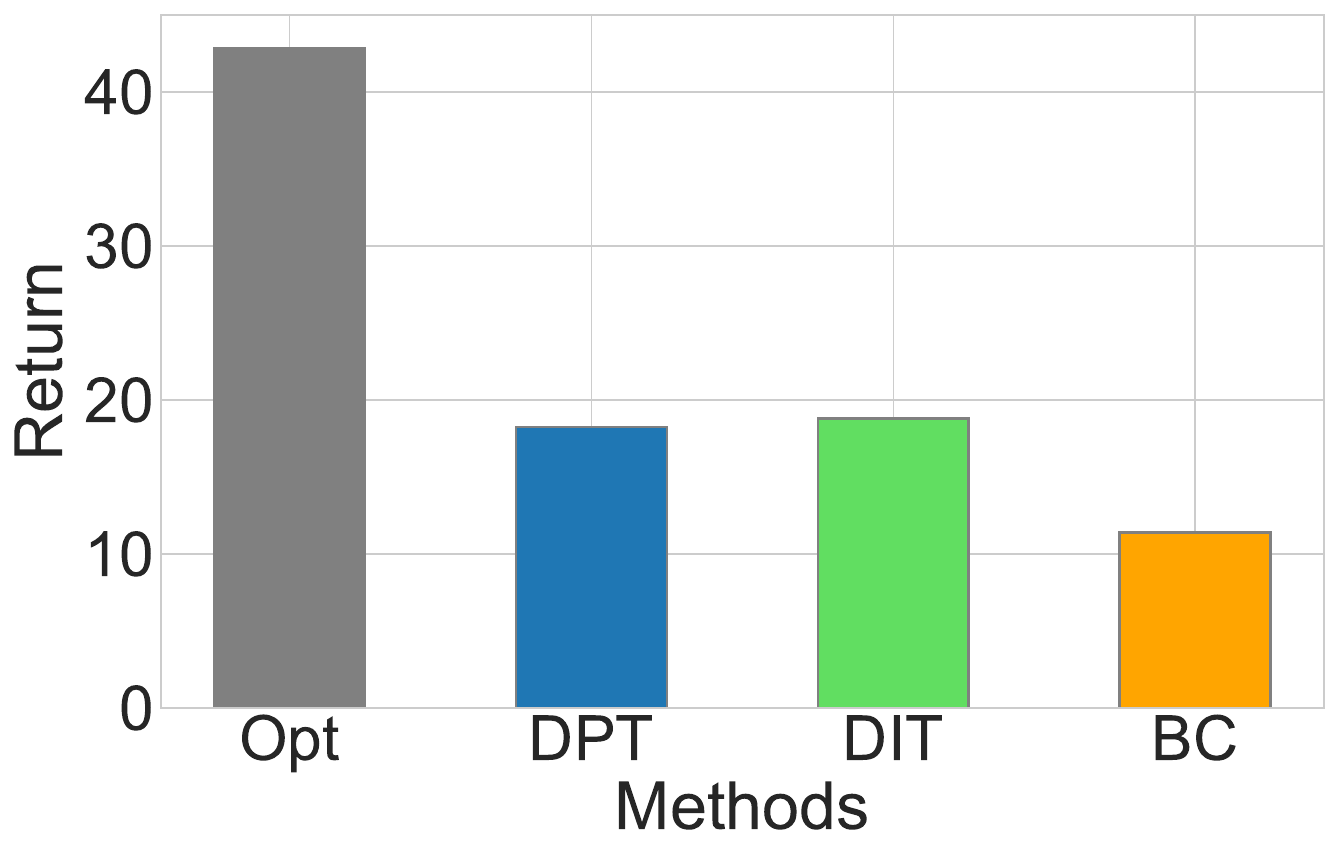}
  \caption{Ablation Study on \textbf{Miniworld}. From left to right: online testing, offline random, offline expert.}
  \label{fig:miniworld_subplots}
\end{figure*}

\textbf{In-context Decision-making for Navigating Tasks.}
We explore how our method generalizes to unseen RL tasks, using the Dark Room environment~\citep{AD}. Following the evaluation protocal of DPT~\citep{DPT}, we use 80 goals for training and evaluate on the remaining 20 unseen goals. For SAC, since it is an online learning method, we directly train from scratch on the 20 goals to benchmark the returns of ICRL. Figure~\ref{fig:online_darkroom} shows the online evaluation over 40 episodes. After 40 episodes, SAC gains little in return, demonstrating the difficulty of the RL tasks for testing. Restricted by their capability to efficiently explore in new tasks, BC also perform poorly. \emph{Although our method (DIT) initially has lower returns than DPT and AD, it quickly surpasses them and continues to improve.} Figures~\ref{fig:offline_darkroom_random} and \ref{fig:offline_darkroom_expert} show the results for offline evaluations with expert (high-reward) trajectories and  random (low-reward) trajectories. Despite being pretrained without the optimal action labels, DIT models demonstrate competitive performance to that of DPT. 

\textbf{In-context Continuous Control.} We explore two complex continuous control tasks, Meta-World~\citep{yu2020meta} and Half-Cheetah~\citep{todorov2012mujoco}. 
Meta-World has $20$ tasks in total, to evaluate our approach's ability to generate to new RL tasks, we use $15$ tasks to train and $5$ to test. Similarly, for Half-Cheetah, out of the $40$ total tasks, we use $35$ tasks to train and $5$ to test. 
Due to space constraints, the results for Meta-World and Half-Cheetah are presented in Figure~\ref{fig:cheetah_subplots} in the Appendix. We observe that DIT outperforms PDT and BC in all testing scenarios. Moreover, DIT consistently outperforms AD despite with less information used for pretraining. It can also be observed that the performance gap between DPT and DIT is larger in the Meta-World environment compared to Half-Cheetah. We believe this is because Meta-World is a more challenging environment than Half-Cheetah. As a result, the \emph{additional} set of optimal action labels for out-of-trajectory query states used by DPT has a greater impact on performance, while DIT can only utilize in-trajectory states and actions as query states with pseudo-optimal labels.

\textbf{Ablation Study on Weighted Supervised Pretraining.} While DIT's significantly improved performance over BC (the unweighted version of DIT) already demonstrates the effectiveness of the proposed weighted pretraining objective, we now conduct experiments in the Miniworld~\citep{MinigridMiniworld23} environment to explore whether DIT reaches the \emph{limits} of the weighted pretraining framework. 

To this end, we compare our model to the DPT model that uses a pretraining dataset containing only query states that belong to the set of observed states in the pretraining dataset, along with their associated optimal action labels. In this scenario, the total number of pretraining context datasets and optimal action labels for DPT remains the same, but the query states are restricted. This restriction makes the DPT model function as an \emph{oracle upper bound} for DIT, as all query states used by DIT in the weighted pretraining originate from the observed states. 

The significant performance gain of DIT over BC (the unweighted version of DIT) demonstrate the effectiveness of the weighted pretraining framework.
Surprisingly, in the online setting, DPT struggles to perform, while DIT models gradually improve their returns, as shown in Figure~\ref{fig:miniworld_subplots}. In the offline setting, DIT again demonstrates competitive performance with DPT. These results indicate that }DIT has effectively leveraged the pretraining dataset to a significant extent. 

\section{Discussion}
We have proposed a framework DIT for pretraining TMs from suboptimal historical data for ICRL. DIT has guaranteed policy improvements over the suboptimal behavior policies and demonstrated superior empirical performance on a comprehensive set of ICRL benchmarks. 
Despite these strengths, DIT still requires the behavior policies to have reasonable rewards. Most historical data typically adheres to this constraint. That said, it is highly unlikely to infer near-optimal actions solely from random trajectories without any information about optimal policies.  
To this end, we will further explore the limits of the proposed weighted pretraining framework in future work.

\section*{Acknowledgements}
Juncheng Dong and Vahid Tarokh were supported in part by the National Science Foundation (NSF) under the National AI Institute for Edge Computing Leveraging Next Generation Wireless Networks Grant \#2112562. Ethan Fang is partially supported by NSF grants DMS-2346292 and DMS-2434666. Zhuoran Yang acknowledges support from NSF DMS 2413243.

\section*{Impact Statement}
This paper presents work whose goal is to advance the field of Machine Learning. There are many potential societal consequences of our work, none of which we feel must be specifically highlighted here.

\nocite{langley00}

\bibliography{refer}
\bibliographystyle{icml2025}

\newpage
\appendix
\onecolumn
\section{Results on Meta-World and Half-Cheetah}\label{sec:app-continues}

\begin{figure*}[h]
  \centering
  \includegraphics[width=0.30\linewidth]{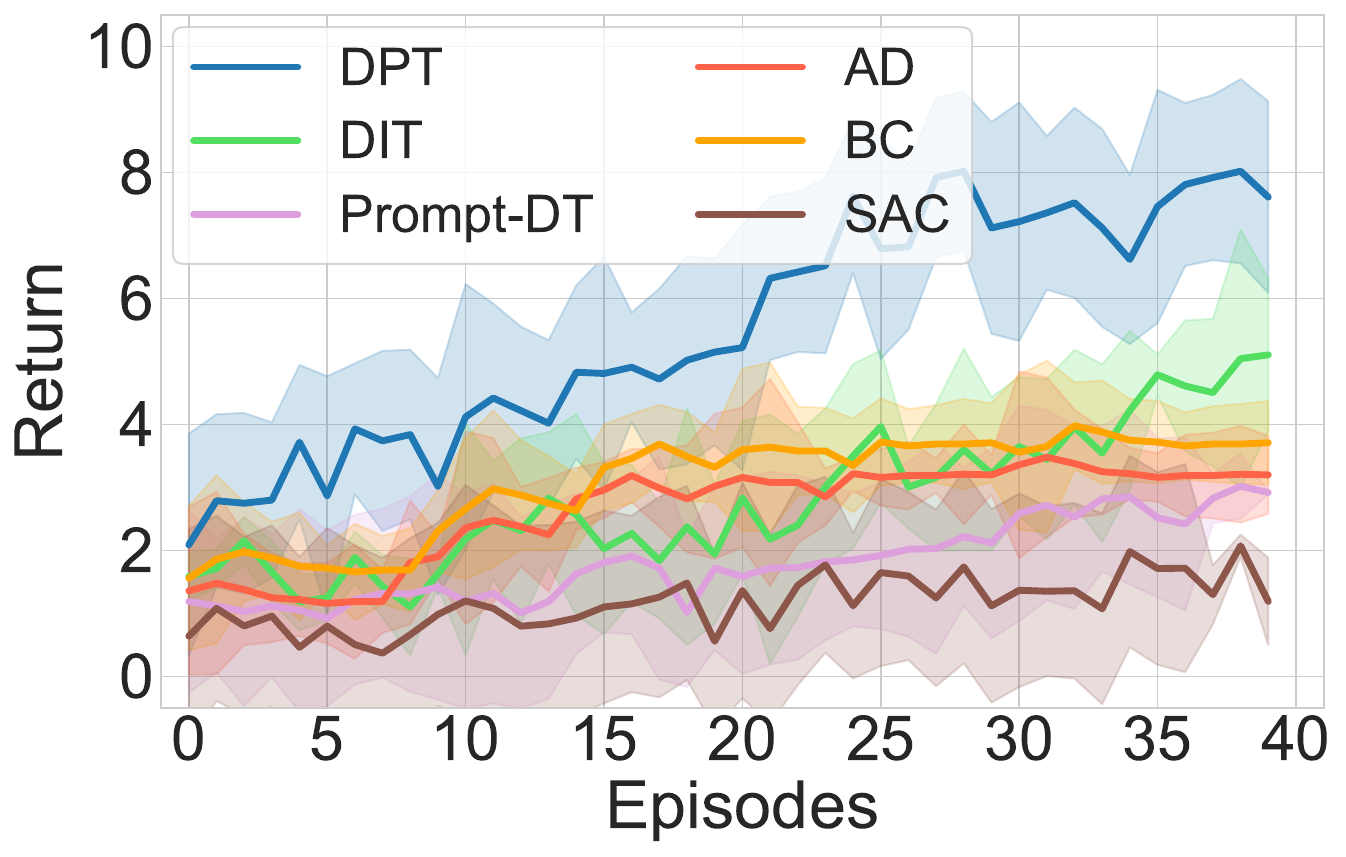}
  \vspace{-1em}
  \hfill 
  \includegraphics[width=0.30\linewidth]{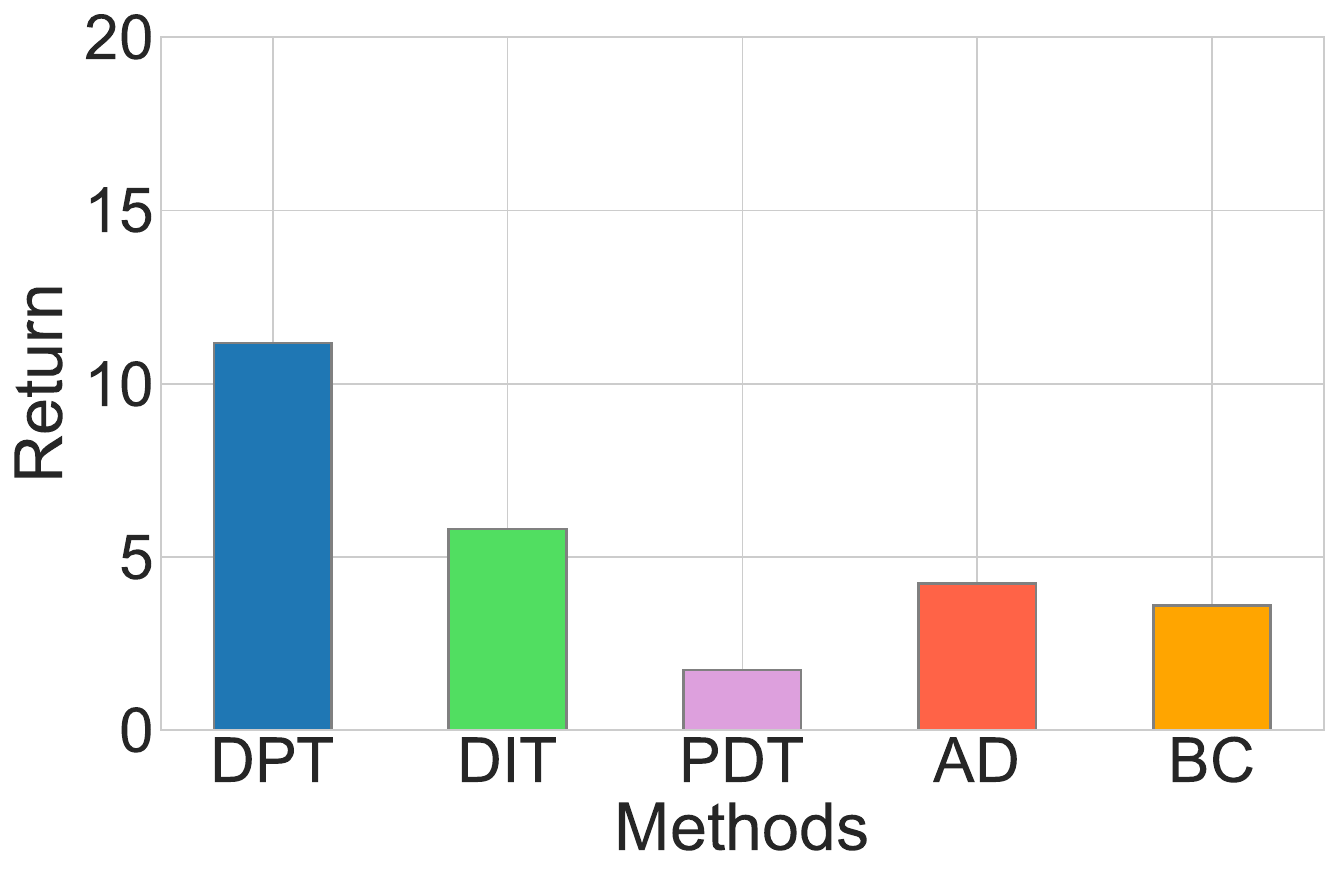}
  \hfill
  \includegraphics[width=0.30\linewidth]{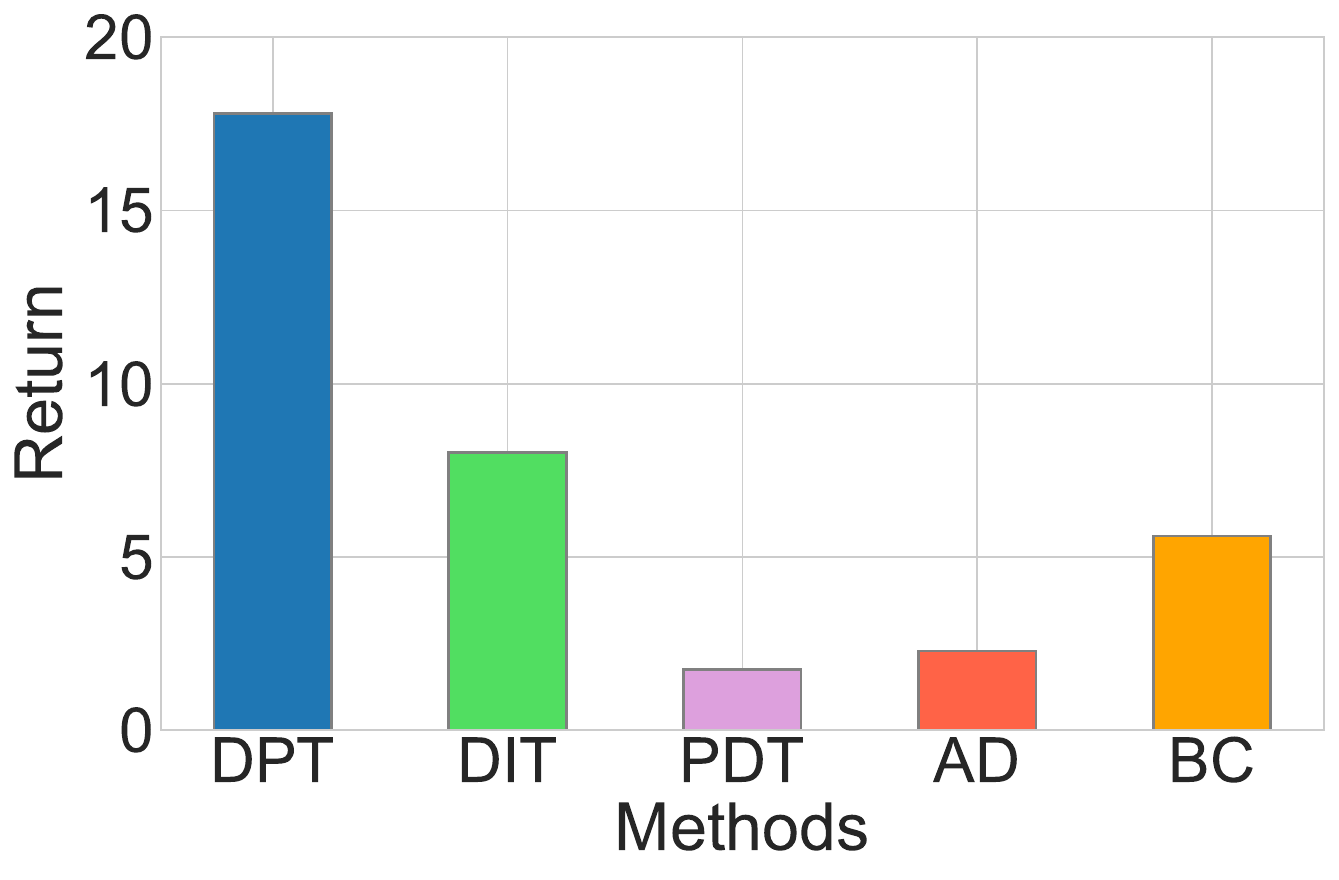}
  \hfill
    \subfloat[Online Testing]{%
    \includegraphics[width=0.30\linewidth]{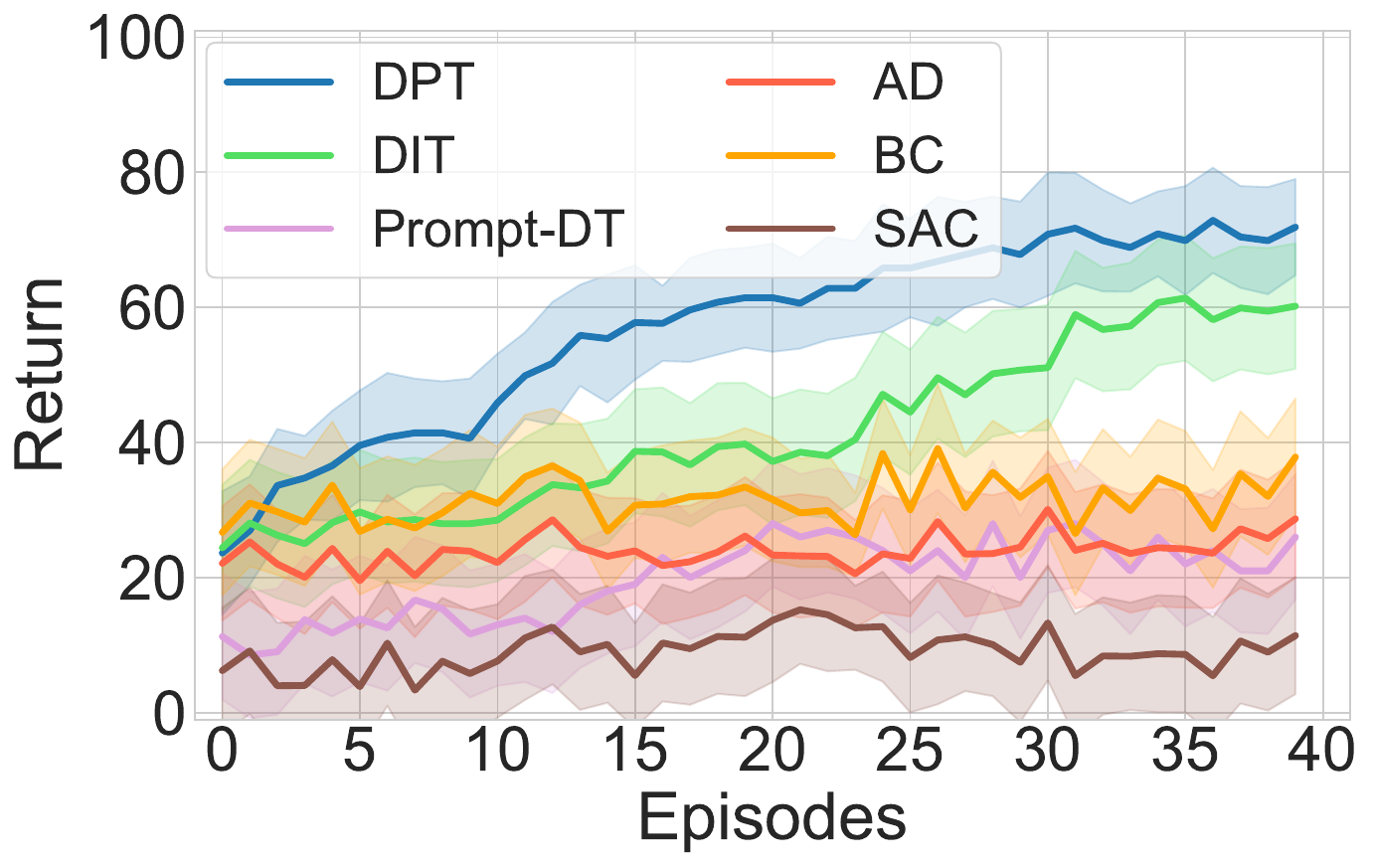}%
    \label{fig:online_cheetah}
  }
  \hfill
  \subfloat[Offline Random]{%
    \includegraphics[width=0.30\linewidth]{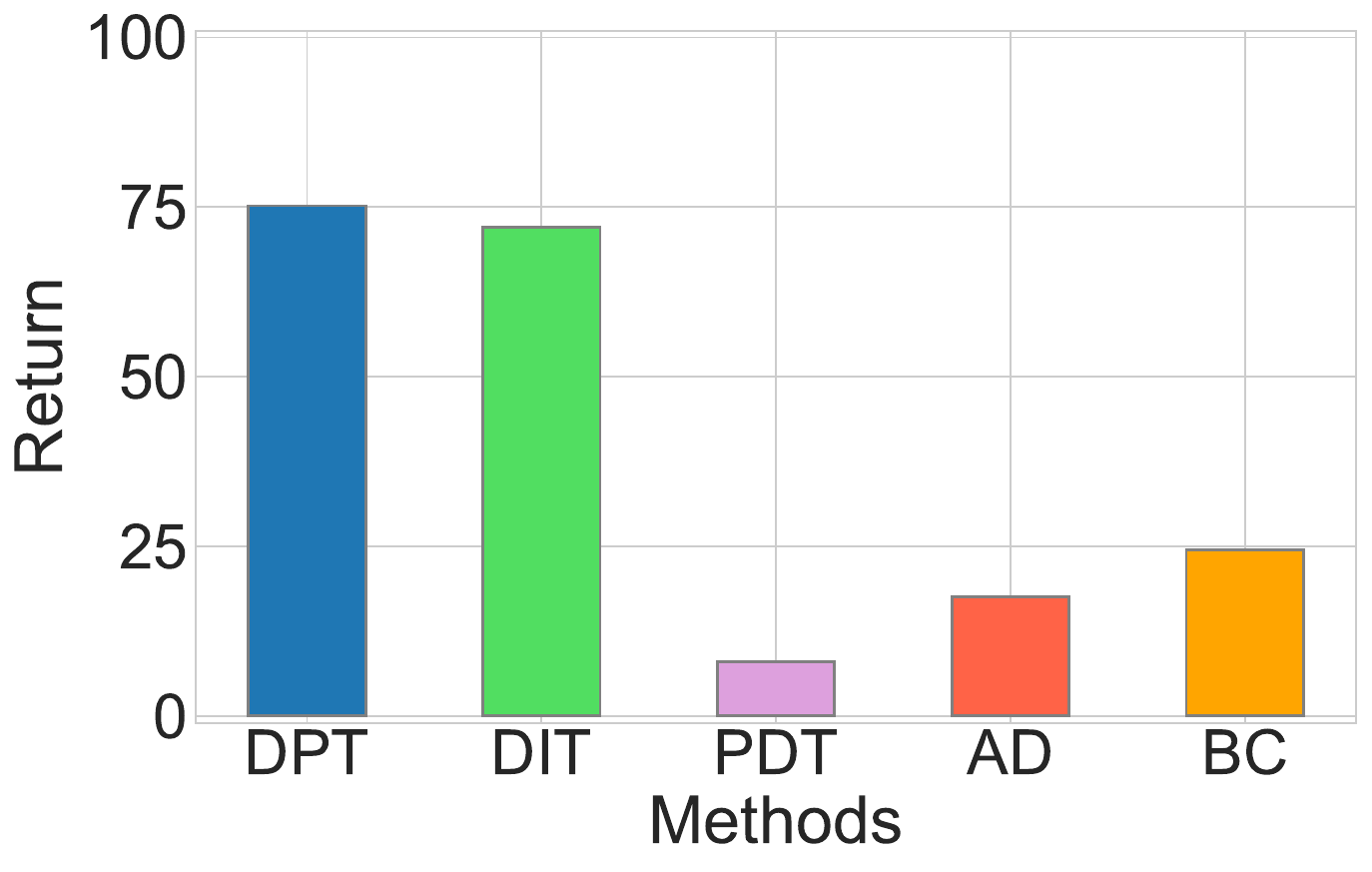}%
    \label{fig:offline_cheetah_random}
  }
  \hfill
  \subfloat[Offline Expert]{%
    \includegraphics[width=0.30\linewidth]{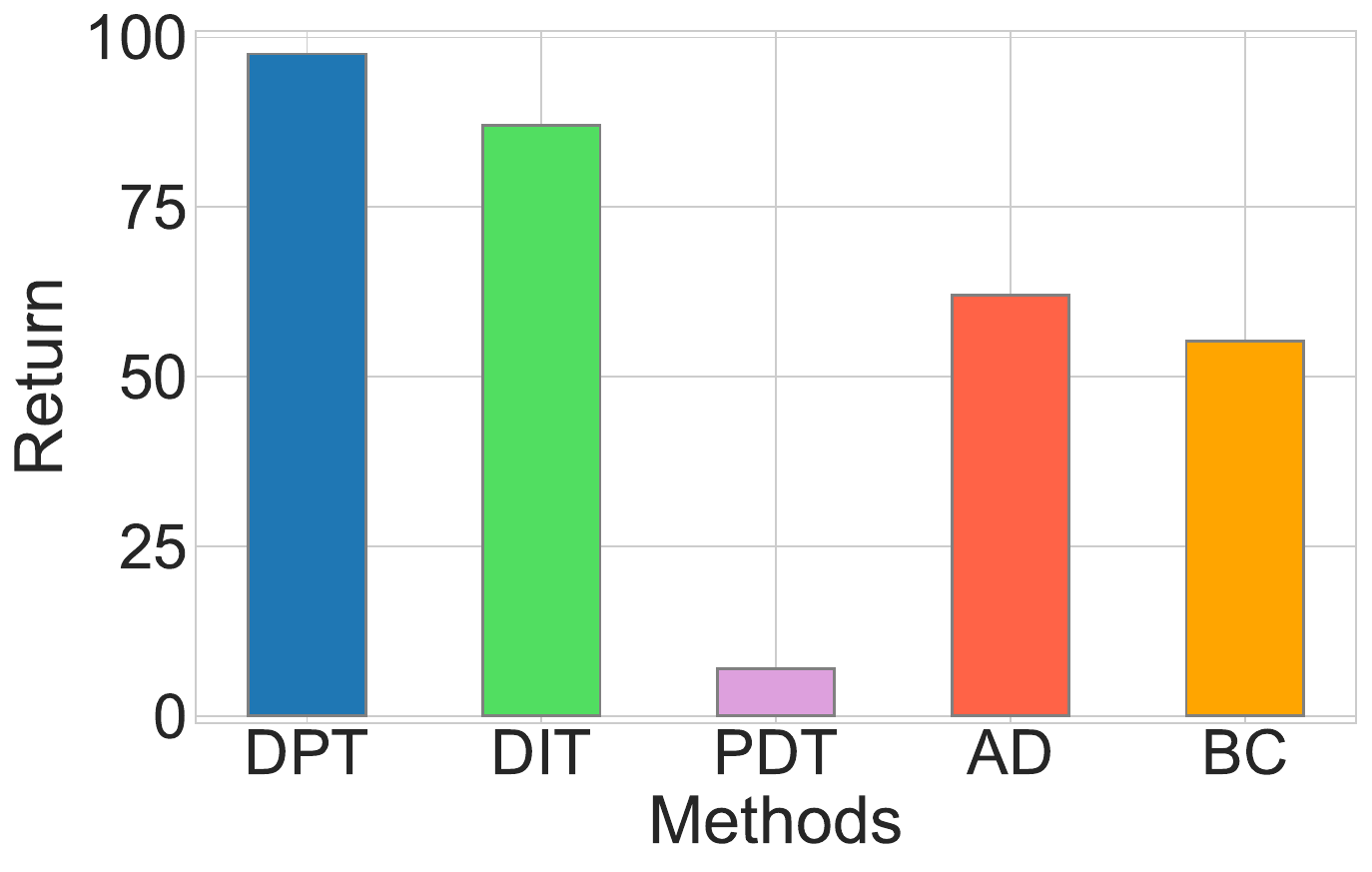}%
    \label{fig:offline_cheetah_expert}
  }
  \caption{Top row shows results on \textbf{Meta-World}; bottom row shows results on \textbf{Half-Cheetah}.}
  \label{fig:cheetah_subplots}
\end{figure*}

\section{Baselines}\label{sec:app-baselines}
\subsection{Bandit Algorithms}

\paragraph{Empirical Mean (EMP).} We follow~\cite{DPT} to consider a strengthened version of EMP which, in the offline setting, only chooses from actions that have been observed at least once in the offline dataset while, in the online setting, at least choosing every action once. At every time step, EMP chooses actions as 
$$
\widehat{a} \in \argmax_{a\in\cA}\{\widehat{\mu}_a\},
$$
where $\widehat{\mu}_a$ is the average observed reward for action $a$.

\paragraph{Upper Confidence Bound (UCB).} Motivated by the Hoeffding's Inequality, at each time step, UCB chooses actions as 
$$
\widehat{a} \in \argmax_{a\in\cA}\bigg\{\widehat{\mu}_a + C\cdot\sqrt{1/n_a} \bigg\},
$$
where $C$ is a hyperparameter and $n_a$ is the number of times $a$ has been chosen. For unseen actions, $\widehat{\mu}_a$ is set to $0$ and $n_a$ is set to $1$. We follow~\cite{DPT} to set $C$ to be $1$ as it demonstrates the best empirical performance. 

\paragraph{Lower Confidence Bound (LCB).} LCB is on the contrary of UCB. In the offline setting, LCB only chooses from observed actions in the offline dataset. Specifically, it chooses actions as 
$$
\widehat{a} \in \argmax_{a\in\cA}\bigg\{\widehat{\mu}_a - C\cdot\sqrt{1/n_a} \bigg\},
$$
where $C$ is a hyperparameter and $n_a$ is the number of times $a$ has been chosen. Similar to hyperparameter of UCB, the hyperparameter $C$ for LCB is also set to $1$ due to its strong empirical performance. 

\paragraph{Thompson Sampling (TS).} We use Gaussian TS~\cite{thompson} with a Gaussian prior. The mean and variance of the prior are set to the true mean and variance of the pretraining tasks: $0$ for mean and $1$ for variance.

\subsection{RL Baselines}\label{sec:app-rl-baselines}

\paragraph{Decision-Pretrained Transformer (DPT).} The Decision-Pretrained Transformer (DPT) is designed to perform in-context learning for reinforcement learning (RL) tasks by leveraging a supervised pretraining approach. The core idea is to train a transformer model to predict optimal actions given a query state and a corresponding in-context dataset, which contains interactions from a variety of tasks. These interactions are represented as transition tuples consisting of states, actions, and rewards, offering context for decision-making. During pretraining, DPT samples a distribution of tasks. For each task \(T_i\), an in-context dataset \( D_i \) is constructed to include sequences of state-action-reward interactions that represent past experience with that task. Additionally, a query state \( s^* \) is sampled from the MDP's state distribution, and the model is trained to predict the optimal action based on this query state and the context \( D_i \). Formally, the training objective is to minimize the expected loss over the sampled task distribution by predicting a distribution over actions given the state and context.

\paragraph{Prompt-based Decision Transformer (Prompt-DT).} Prompt-DT arranges its data to facilitate few-shot policy generalization by using trajectory prompts. For each task \(T_i\), a prompt \(\tau_i^\ast\) of length \(K^\ast\) is constructed from few-shot demonstration data \(P_i\), containing tuples of state, action, and reward-to-go \((s^\ast, a^\ast, \hat{G}^\ast)\). This prompt encodes task-specific context necessary for policy adaptation. Additionally, the recent trajectory history \(\tau_i\) of length \(K\), sampled from an offline dataset \(D_i\), is appended to the prompt to form the full input sequence \(\tau_{\text{input}}\). Formally, this input sequence is represented as \(\tau_{\text{input}} = (\tau_i^\ast, \tau_i) = (\hat{r}^\ast_1, s^\ast_1, a^\ast_1, \ldots, \hat{r}^\ast_{K^\ast}, s^\ast_{K^\ast}, a^\ast_{K^\ast}, \hat{r}_{K^\ast+1}, s_{K^\ast+1}, a_{K^\ast+1}, \ldots, \hat{r}_{K^\ast+K}, s_{K^\ast+K}, a_{K^\ast+K})\). This sequence contains \(3(K^\ast + K)\) tokens, following the state-action-reward format. The full sequence \(\tau_{\text{input}}\) is then passed through a Transformer model, which autoregressively predicts actions at the heads corresponding to each state token. We follow Prompt-DT's setting and set \(k=20\).

\paragraph{Algorithm Distillation (AD).} Algorithm Distillation (AD) transforms the process of reinforcement learning (RL) into an in-context learning task by training a transformer model to predict optimal actions based on a cross-episodic trajectory. AD gathers trajectories from training episodes, where each trajectory \( T \) of length \( H \) encodes the states, actions, and rewards observed over multiple episodes. Instead of training via traditional gradient updates, AD models the training history to predict actions for subsequent episodes, effectively distilling the behavior of RL algorithms like SAC into the transformer. This enables the model to learn directly from context, facilitating quick adaptation to new tasks and improving learning efficiency.

\paragraph{Behavior Cloning (BC).} Behavior Cloning (BC) is a supervised learning approach for imitation learning, where the goal is to learn to mimic the behavior of a policy by mapping states to actions. Specifically, the objective is to minimize the discrepancy between the actions predicted by the learned policy \( \pi_\theta \) and the target policy's actions, often through a loss function such as mean squared error or cross-entropy for continuous or discrete action spaces, respectively:
\(
J(\theta) = \mathbb{E}_{(s_t, a_t) \sim D} [\ell(\pi_\theta(s_t), a_t)],
\)
where \( D \) is the dataset of state-action pairs collected from the target policy's demonstrations, \( s_t \) is the state at time step \( t \), and \( a_t \) is the corresponding target action.

\paragraph{Soft Actor-Critic (SAC).} Soft Actor-Critic (SAC) is an off-policy deep reinforcement learning algorithm that balances exploration and exploitation by maximizing a trade-off between expected reward and entropy. The core objective of SAC is to learn a policy that not only maximizes cumulative rewards but also encourages exploration by maximizing the entropy of the policy's actions. SAC uses an actor network to predict actions, and two critic networks to estimate the Q-values of state-action pairs. The training objective involves learning the parameters of the policy to maximize a soft objective function:
\(
J(\pi) = \sum_t \mathbb{E}_{(s_t, a_t) \sim D} [Q(s_t, a_t) - \alpha \log \pi(a_t|s_t)],
\)
where \( Q(s_t, a_t) \) is the Q-value estimated by the critics, \( \alpha \) is a temperature parameter controlling the trade-off between reward and entropy, and \( \pi(a_t|s_t) \) is the action probability distribution given the state. SAC is trained by sampling mini-batches of transitions from a replay buffer to update the policy (actor) and Q-value estimates (critics). For model and training settings, we use the default implementation from Stable Baselines3 \cite{stable-baselines3}.

\section{Theoretical Results}\label{sec:app-theory}

\subsection{Proof of Proposition~\ref{lemma:equivalence}}
\textit{
Consider the following optimization problem where $\bbE_{\tau,s,a}$ is defined as in Equation~\eqref{eqn:loss-dit} except that $a \sim \pi(a|s;\tau)$, i.e., the action is sampled from the task-conditioned policy rather than the behavioral policies:
\begin{equation}
    \max_{\pi} J(\pi) = \bbE_{\tau,s,a}\bigg[\underbrace{A_{\tau}^b\left(s,a\right)}_{\text{(I)}} - \eta\cdot\underbrace{\KL(\pi(\cdot|s;\tau) \| \pi_{\tau}^b(\cdot|s))}_{\text{(II)}}\bigg],
\end{equation}
where $\KL$ is the Kullback–Leibler (KL) divergence, and let $\pi^\star \in \argmax_{\pi}J(\pi)$ be its optimizer. Then we have for any policy $\pi(a|s;\tau)$,
\begin{equation}
\begin{aligned}
     &\bbE_{\tau \sim p(\tau),s \sim d_{\tau}(s)}\left[\KL\left(\pi^\star(\cdot|s;\tau) \| \pi(\cdot|s;\tau)\right)\right] \\ 
     &= -\bbE_{\tau,s,a}\left[\frac{1}{Z_{\tau}(s)}\exp \big ( A_{\tau}^b(s,a) / \eta  \big) \cdot \log \pi(a|s;\tau)\right] + C,
\end{aligned}
\end{equation}
where $\bbE_{\tau,s,a}$ is defined as in~\eqref{eqn:loss-dit}, $C$ is a constant independent of $\pi$ and $Z_\tau(s) = \sum_{a}\pi_{\tau}^b(a|s)\exp(A_{\tau}^b\left(s,a\right)/\eta)$.
}
\begin{proof}[Proof of Proposition~\ref{lemma:equivalence}]
For any task $\tau$ and fixed state $s$, we have
\begin{align*}
   &\max_{\pi}\bbE_{a \sim \pi(a|s;\tau)}\left[A_{\tau}^b\left(s,a\right) - \eta\cdot\KL(\pi(\cdot|s;\tau) \| \pi_{\tau}^b(\cdot|s))\right] \\
   &= \min_{\pi}\bbE_{a \sim \pi(a|s;\tau)}[\log\frac{\pi(a|s;\tau)}{\pi_{\tau}^b(a|s)}-\frac{1}{\eta}A_{\tau}^b\left(s,a\right)] \\
    &= \min_{\pi}\bbE_{a \sim \pi(a|s;\tau)}\left[\log\frac{\pi(a|s;\tau)}{\pi_{\tau}^b(a|s)\exp(A_{\tau}^b\left(s,a\right)/\eta)}\right] \\
   &= \min_{\pi}\bbE_{a \sim \pi(a|s;\tau)}\left[\log\frac{\pi(a|s;\tau)}{\pi_{\tau}^b(a|s)\exp(A_{\tau}^b\left(s,a\right)/\eta)/Z_\tau(s)}-\log Z_\tau(s)\right] \\
   &= \min_{\pi}\bbE_{a \sim \pi(a|s;\tau)}\left[\log\frac{\pi(a|s;\tau)}{\pi_{\tau}^b(a|s)\exp(A_{\tau}^b\left(s,a\right)/\eta)/Z_\tau(s)}\right]\quad\text{($Z_\tau(s)$ is independent of $\pi$)} \\
   & = \min_{\pi}\KL(\pi(\cdot|s;\tau) \| \pi_{\tau}^\star), 
\end{align*}
where $\pi_{\tau}^\star(a|s) = \pi_{\tau}^b(\cdot|s)\exp(A_{\tau}^b\left(s,a\right)/\eta)/Z_\tau(s)$. Note that the optimum $\pi$ for a fixed $s$ and task $\tau$ is obtained at $\pi = \pi_{\tau}^\star$, which is unique by the uniqueness property of KL divergence, i.e., $\KL(\pi \| \pi_{\tau}^\star) = 0$ if and only if $\pi =\pi_{\tau}^\star(a|s)$. Thus, the optimal task-conditioned policy is 
$$
\pi^\star(a|s;\tau) = \pi_{\tau}^\star = \pi_{\tau}^b(a|s)\exp(A_{\tau}^b\left(s,a\right)/\eta)/Z_\tau(s).
$$
Thus, we further have
\begin{align*}
    &\bbE_{\tau \sim p(\tau),s \sim d_{\tau}(s)}\left[\KL\left(\pi^\star(\cdot|s;\tau) \| \pi(\cdot|s;\tau)\right)\right] \\
    & = \bbE_{\tau \sim p(\tau),s \sim d_{\tau}(s), a\sim \pi^\star(a|s;\tau)}\left[\log\frac{\pi^\star(a|s;\tau)}{\pi(a|s;\tau)}\right]\\
    & = \bbE_{\tau \sim p(\tau),s \sim d_{\tau}(s)}\left[\sum_{a}\pi_{\tau}^b(a|s)\exp(A_{\tau}^b\left(s,a\right)/\eta)/Z_\tau(s)\log\frac{\pi^\star(a|s;\tau)}{\pi(a|s;\tau)}\right] \\
    & = -\bbE_{\tau \sim p(\tau),s \sim d_{\tau}(s), a\sim\pi_{\tau}^b(a|s)}\left[\exp(A_{\tau}^b\left(s,a\right)/\eta)/Z_\tau(s)\log\pi(a|s;\tau)\right] + C,
\end{align*}
where $C = \bbE_{\tau \sim p(\tau),s \sim d_{\tau}(s), a\sim\pi_{\tau}^b(a|s)}\left[\exp(A_{\tau}^b\left(s,a\right)/\eta)/Z_\tau(s)\log\pi^\star(a|s;\tau)\right]$.
\end{proof}
\subsection{Proof of Proposition~\ref{prop:policy-improvement}}
\textit{Let $\pi^\star$ be the policy that optimizes Equation~(\ref{eqn:cpi}). For any task $\tau$ and policy $\pi$, let 
$G_{\tau}(\pi) = \bbE[\sum_{h=0}^{\infty}\gamma^{h}r_h|\pi,\tau]$
represent the expected reward of $\pi$ for $\tau$. Let $\pi_\tau^\star$ denote $\pi^\star(a|s;\tau)$.
Then 
\begin{equation}
    \begin{aligned}
        \bbE_{\tau \sim p_\tau}[G_{\tau}(\pi_\tau^\star)-G_{\tau}(\pi_\tau^b)] \ge \frac{\eta}{1-\gamma}\bbE_{\tau\sim p_\tau}[C_\tau^D]
    \end{aligned} - \frac{2\gamma}{(1-\gamma)^2}\bbE_{\tau \sim p_\tau}\left[C^A_\tau\sqrt{C_\tau^D/2}\right],
\end{equation}
where $C_\tau^D = \bbE_{s \sim d_{\tau}(s)}[\KL(\pi^\star(\cdot|s;\tau)\|\pi_\tau^b(\cdot|s))]$ and $C^A_\tau = \max_{s}|\bbE_{a \sim \pi^\star(a|s;\tau)}A_{\tau}^b(s,a)|$.
}
\begin{proof}[Proof of Proposition~\ref{prop:policy-improvement}]
First consider any fixed task $\tau$. From Corollary 1 in \cite{achiam2017constrained}, we have
\begin{equation}\label{eqn:task-lb}
    G_{\tau}(\pi_\tau^\star)-G_{\tau}(\pi_\tau^b) \ge \frac{1}{1-\gamma}\sum_{s}d_{\tau}(s)\sum_{a}\pi^\star(a|s;\tau)A_{\tau}^b\left(s,a\right)-\frac{2\gamma C^A_\tau}{(1-\gamma)^2}\bbE_{s \sim d_{\tau}(s)}\|\pi^\star(\cdot|s;\tau)-\pi_\tau^b(\cdot|s)\|_{TV},
\end{equation}
where $C^A_\tau = \max_{s}|\bbE_{a \sim \pi^\star(a|s;\tau)}A_{\tau}^b(s,a)|$ and $\|\cdot\|_{TV}$ is the total variation distance between two distributions. In the proof of Propsition~\ref{lemma:equivalence}, we observe that: for any $\tau$ and $s$,
$$
\pi^\star(\cdot|s;\tau) \in \argmax_{\pi}\mathcal{L}(\pi,s)=\bbE_{a \sim \pi(a|s;\tau)}\left[A_{\tau}^b\left(s,a\right) - \eta\cdot\KL(\pi(\cdot|s;\tau) \| \pi_{\tau}^b(\cdot|s))\right].
$$
Thus, $\mathcal{L}(\pi_\tau^\star,s) \ge \mathcal{L}(\pi_\tau^b,s)$, which implies that
$$
\bbE_{a \sim \pi^\star(a|s;\tau)}\left[A^b_{\tau}\left(s,a\right) - \eta\cdot\KL(\pi^\star(\cdot|s;\tau) \| \pi_{\tau}^b(\cdot|s))\right] \ge \bbE_{a \sim \pi_\tau^b(a|s;\tau)}\left[A_{\tau}^b\left(s,a\right)\right] = 0.
$$
Hence, we have
\begin{equation}\label{eqn:first-lb}
   \bbE_{s\sim d_{\tau}(s),a \sim \pi^\star(a|s;\tau)}\left[A^b_{\tau}\left(s,a\right)\right] \ge \eta\bbE_{s\sim d_{\tau}(s)}[\KL(\pi^\star(\cdot|s;\tau) \| \pi_{\tau}^b(\cdot|s))]. 
\end{equation}
Moreover, from Pinsker's inequality~\citep{canonne2022short}, 
\begin{align}\label{eqn:pinsker}
    \bbE_{s \sim d_{\tau}(s)}\|\pi^\star(\cdot|s;\tau)-\pi_\tau^b(\cdot|s)\|_{TV} &\le   \bbE_{s \sim d_{\tau}(s)}\sqrt{\frac{1}{2}\KL(\pi^\star(\cdot|s;\tau)\|\pi_\tau^b(\cdot|s))}\\ 
    &\le \sqrt{\frac{1}{2}\bbE_{s \sim d_{\tau}(s)}[\KL(\pi^\star(\cdot|s;\tau)\|\pi_\tau^b(\cdot|s))]},
\end{align}
where the last inequality comes from Jensen's Inequality. Pluging (\ref{eqn:pinsker}) and (\ref{eqn:first-lb}) into (\ref{eqn:task-lb}), we have
\begin{align}
    G_{\tau}(\pi_\tau^\star)-G_{\tau}(\pi_\tau^b) \ge \frac{\eta}{1-\gamma}&\bbE_{s\sim d_{\tau}(s)}[\KL(\pi^\star(\cdot|s;\tau) \| \pi_{\tau}^b(\cdot|s))] \\ 
    &- \frac{2\gamma C_{\tau}}{(1-\gamma)^2}\sqrt{\frac{1}{2}\bbE_{s \sim d_{\tau}(s)}[\KL(\pi^\star(\cdot|s;\tau)\|\pi_\tau^b(\cdot|s))]} .
\end{align}
Taking expectation with respect to $\tau$ concludes the proof:
\begin{equation}
    \begin{aligned}
        \bbE_{\tau \sim p_\tau}[G_{\tau}(\pi_\tau^\star)-G_{\tau}(\pi_\tau^b)] \ge \frac{\eta}{1-\gamma}\bbE_{\tau\sim p_\tau}[C_\tau^D]
    \end{aligned} - \frac{2\gamma}{(1-\gamma)^2}\bbE_{\tau \sim p_\tau}\left[C^A_\tau\sqrt{C_\tau^D/2}\right],
\end{equation}
where $C_\tau^D = \bbE_{s \sim d_{\tau}(s)}[\KL(\pi^\star(\cdot|s;\tau)\|\pi_\tau^b(\cdot|s))]$ and $C^A_\tau = \max_{s}|\bbE_{a \sim \pi^\star(a|s;\tau)}A_{\tau}^b(s,a)|$.
\end{proof}

\subsection{Justification for the identity $Z_\tau(s)=1$}\label{sec:zs1}
Assume that $|A_{\tau}^b(s,a)/\eta| \ll |\log \pi_{\tau}^b(a|s)|$. Note that this can always be satisfied through reward normalization. Then
\begin{align*}
    Z_\tau(s) &= \sum_{a}\pi_{\tau}^b(a|s)\exp(A_{\tau}^b\left(s,a\right)/\eta) = \bbE_{a \sim \pi_{\tau}^b(a|s)}[\exp(A_{\tau}^b\left(s,a\right)/\eta)] \\
    &= \bbE_{a \sim \pi_{\tau}^b(a|s)}[1+ A_{\tau}^b\left(s,a\right)/\eta + o((A_{\tau}^b\left(s,a\right)/\eta)^2)]\quad\text{(by Taylor expansion)}.
\end{align*}
Moreover, by definition of the advantage function, we have 
$$
\bbE_{a\sim \pi_{\tau}^b(a|s)}[A_{\tau}^b\left(s,a\right)] = \bbE_{a\sim \pi_{\tau}^b(a|s)}[Q^b_\tau(s,a)] - V_\tau^b(s) = 0.
$$
Thus, 
\begin{align*}
    Z_\tau(s) &= 1 + \bbE_{a \sim \pi_{\tau}^b(a|s)}[o((A_{\tau}^b\left(s,a\right)/\eta)^2)] \approx 1.
\end{align*}

\section{MDP Environments}\label{sec:app-envs}

\paragraph{Dark Room.} The agent is randomly placed in a room of $10 \times 10$ grids, and there is an \emph{unknown} goal location on one of the grid. Thus, there are $10x10=100$ goals. The agent's observation is its current position/grid in the room, i.e., $\cS = [10]\times[10]$. The agent needs to move to the goal location by choosing from $5$ actions: to move in one of the $4$ directions (up, down, left, right) or stay still. The agent receives a reward of $1$ only when it is at the goal; otherwise, it receives $0$. The horizon for Dark Room is $100$.  We follow~\cite{DPT} to use the tasks on $80$ out of the $100$ goals for pretraining, and reserve the rest $20$ goals for testing our models' in-context RL capability for unseen tasks. The optimal actions are defined as: move up or down until the agent is on the same vertical position as the goal; otherwise move left or right until the agent reaches the goal. 

\paragraph{Miniworld.} The agent is placed in a room with four boxes of different colors, one of which being the target box. The goal is to reach a box of a specific color in the room. The agent receives a $(25\times25\times3)$ color image and its $2$-D direction as input, and can choose from four possible actions: to turn left/right, move straight forward, or stay still. Similar to Dark Room, it receives a reward of $1$ only when it is near the target box while the horizon is $50$. The optimal actions are defined as follows: turn left/right towards the correct box if the agent's front is not within 15 degrees of the correct box; otherwise move forward and stay if the agent is near the box. 

\paragraph{Meta-World.}
The agent needs to control a robotic arm to pick up an object and place it at a designated target location. In each task, the state space is in 39 dims including the gripper’s position and state (open or closed), the 3D position of the object to be manipulated, and the coordinates of the target location. The agent operates in a continuous action space, where it can adjust the gripper's 3D position and control the open/close state to enable successful grasping and releasing of the object. It provides partial rewards for moving the gripper towards the object, grasping it correctly, transporting it to the target location, and successfully releasing it there. The task goal is to learn an optimal policy that efficiently achieves the sequence of actions required to pick up and accurately place the object at the specified location. Each task has a different goal position. We train in 15 tasks and
test in 5 tasks.
\paragraph{Half-Cheetah.}
The agent needs to control a 2D half-cheetah robot to achieve and maintain varying target velocities, which change across episodes. The state space contains the cheetah’s motion, including joint angles, velocities, body velocity, and position. These observations enable the agent to learn intricate movement patterns and maintain balance while running. The action controls the torques applied to each joint of the cheetah, thus dictating its locomotion and stability. The reward is designed to align with the core task objective: matching the agent’s velocity to the target velocity. Each task has different target velocity, and we use 35 tasks to train and 5 to test.

\section{Pretraining Dataset}\label{sec:app-pretraining}
\textbf{Pretraining Datasets for Dark Room and Miniworld.}  To ensure coverage of optimal actions (so that optimal policies can be inferred), at every step, with probability $p$ (respectively $1-p$) we use optimal policy (respectively random policy) to choose action. We choose $p$ so that the average reward of the trajectories in the pretraining dataset is less than $30\%$ of that of the optimal trajectories, reflecting the challenging yet common scenarios. 
For Dark Room, to test whether DIT models can generalize to unseen RL problems in context, we collect context datasets from only $80$ out of the total $100$ goals and reserves the rest $20$ for testing. For each training goal, we follow the setting of DPT to collect $1$k context datasets, leading to a total of $80$k context datasets in the pretraining dataset ($64$k for training and $16$k for validation). For Miniworld, we collect $40$k context datasets ($32$k for training and $8$k for validation), $10$k datasets for each of the four tasks corresponding to four possible box colors. 

\textbf{Pretraining Datasets for Meta-World and Half-Cheetah.} We construct the pretraining datasets using historical trajectories generated by agents trained with \emph{Soft Actor Critic} (SAC). Specifically, SAC is trained until convergence for each task, then we sample from its learning trajectories to build the dataset. Our SAC model training follows the settings outlined in \cite{haarnoja2018soft}.
For the Meta-World environment, we use its built-in deterministic policy as the optimal policy; for Half-Cheetah, we use the optimal SAC policy. In Meta-World, we used 15 tasks to train and 5 to test. Similarly, for Half-Cheetah, we used 35 tasks to train and 5 to test. 

\section{Training Parameters.} For all methods, we use the AdamW optimizer with a weight decay of \(1e-4\), a learning rate of \(1e-3\), and a batch size of \(128\).

\section{Model Details}\label{sec:app-model-detail}
\paragraph{Decision Transformer Architecture.} Our model is based on a causal GPT-2 architecture~\cite{gpt2}. It consists of 6 attention layers, each with a single attention head, and an embedding size of 256. To separately encode state, action, and reward pairs, we employ three fully connected layers. We use a single fully connected layer to decode from the transformer's output.

\paragraph{Value Function Transformer Architecture.} The architecture of the value function transformer mirrors that of the decision transformer.

\begin{figure}
    \centering
    \includegraphics[width=0.8\linewidth]{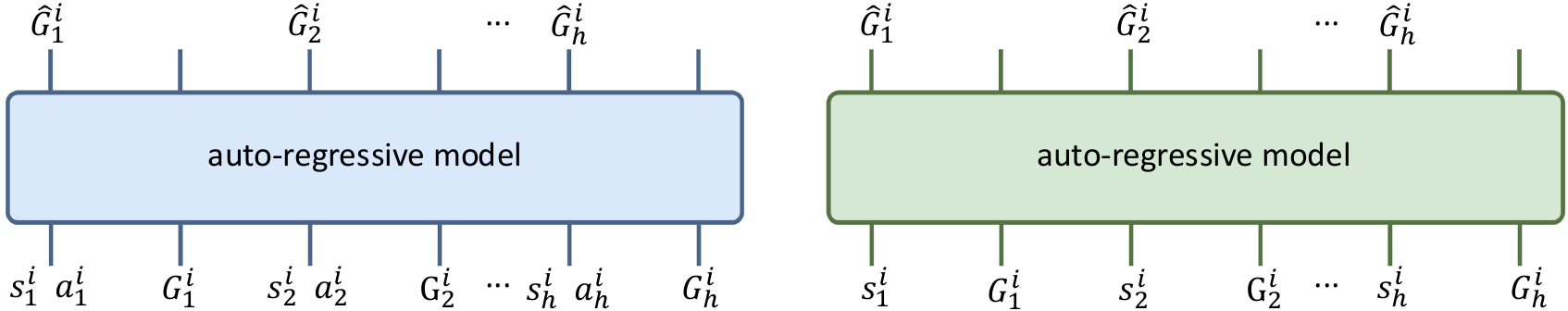}
    \caption{Model structure of the in-context action-value transformer $\widehat{Q}$ (left) and value transformer $\widehat{V}$ (right) on the trajectory of the $i$-th pretraining task.}
    \label{fig:value-transformer}
\end{figure}

\section{Computation Requirements}
Our experiments can be conducted on a single A6000 GPU. It typically takes less than one hour to generate the required dataset for training in parallel. For PPO, training usually takes less than 10 minutes per task. For the other methods, we observe that the transformer model converges within 50 epochs.

\section{Pseudocodes}
\begin{algorithm}
\caption{Pretraining of Decision Importance Transformer}
\label{alg:dit}
\begin{algorithmic}[1] 
\STATE \textbf{Input:} Pretraining Dataset $\cD=\{D^i\}$; transformer models $T_\theta, \widehat{Q}_{\zeta}, \widehat{V}_{\phi}$.
\STATE {\color{red}\fontfamily{cmtt}\selectfont // \; In-context Estimation of Advantage Functions}
\STATE{Randomly initialize and train $\widehat{Q}_{\zeta}$ and $\widehat{V}_{\phi}$ by optimizing the loss in Equation~(\ref{eqn:loss-value-tf}).}
\STATE{Construct the in-context advantage estimator as:
$$
\widehat{A}_{b}=\widehat{Q}_{\zeta}-\widehat{V}_{\phi}.
$$}
\STATE {\color{red}\fontfamily{cmtt}\selectfont // \; Weighted Pretraining}
\STATE{Randomly initialize $T_\theta$.}
\STATE{With trained $\widehat{A}_{b}$ and $\cD$, train $T_{\theta}$ by optimizing the loss in Equation~(\ref{eqn:sample-loss-dit-with-ab}).}
\end{algorithmic}
\end{algorithm}

\begin{algorithm}[h]
\caption{Deployment of In-Context RL Models}
\label{algo:deployment}
\begin{algorithmic}[1] 
\STATE \textbf{Input:} Pretrained transformer Model $T_\theta$; Horizon of episodes $H$; Number of episodes $N$ for online testing; Offline dataset $D_{\text{\tiny off}}=\{(s_h,a_h,s_{h+1},r_h)\}_h$, consisting of transitions collected by a behavioral policy. 
\STATE {\color{red}\fontfamily{cmtt}\selectfont // \; Offline Testing}
\FOR{every time step $h \in \{1,\dots,H\}$}
    \STATE Observe state $s_h$
    \STATE Sample action with $T_\theta$:
    $$
    a_h \sim T_\theta\left(\cdot|s_h, D_{\text{\tiny off}}\right)
    $$
    \STATE Collect reward $r_h$
\ENDFOR
\STATE {\color{red}\fontfamily{cmtt}\selectfont // \; Online Testing}
\STATE Initialize an empty online data buffer $D_{\text{\tiny on}}=\{\}$
\FOR{every online trial $n \in \{1,\dots, N\}$}
    \FOR{every time step $h \in \{1,\dots,H\}$}
        \STATE Observe state $s_h$
        \STATE Sample action with $T_\theta$:
        $$
        a_h \sim T_\theta\left(\cdot|s_h, D_{\text{\tiny on}}\right)
        $$
        \STATE Collect reward $r_h$
    \ENDFOR
    \STATE Append the collected transitions $\{(s_h,a_h,s_{h+1},r_h)\}_h$ into $D_{\text{\tiny on}}$
\ENDFOR
\end{algorithmic}
\end{algorithm}

\section{Estimation of Advantage function}\label{sec:Est_adv_fun}
\begin{figure}[t!]
    \centering
    \subfloat[Q function]{\includegraphics[width=0.48\linewidth]{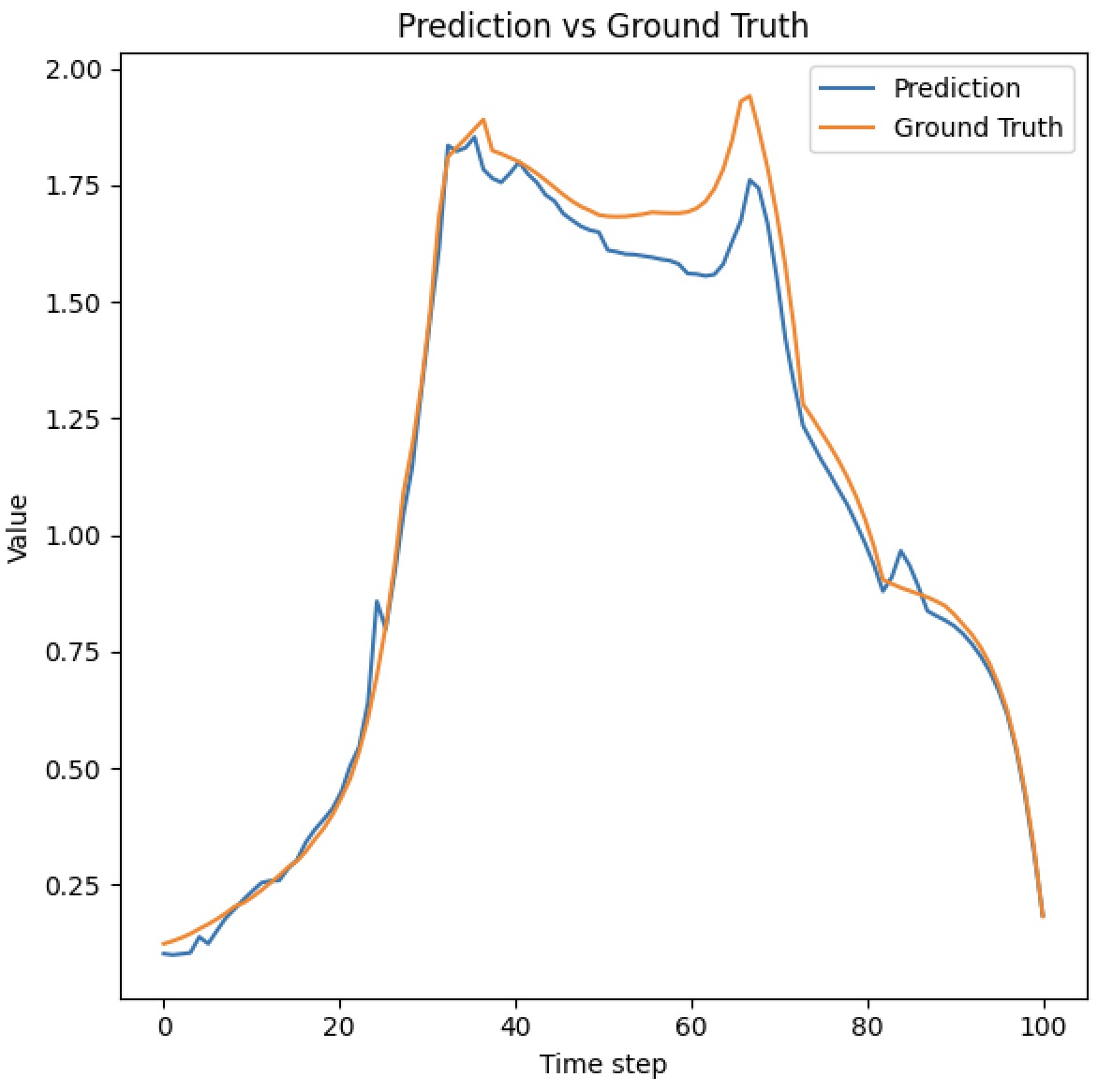}}
    \subfloat[V function]{\includegraphics[width=0.48\linewidth]{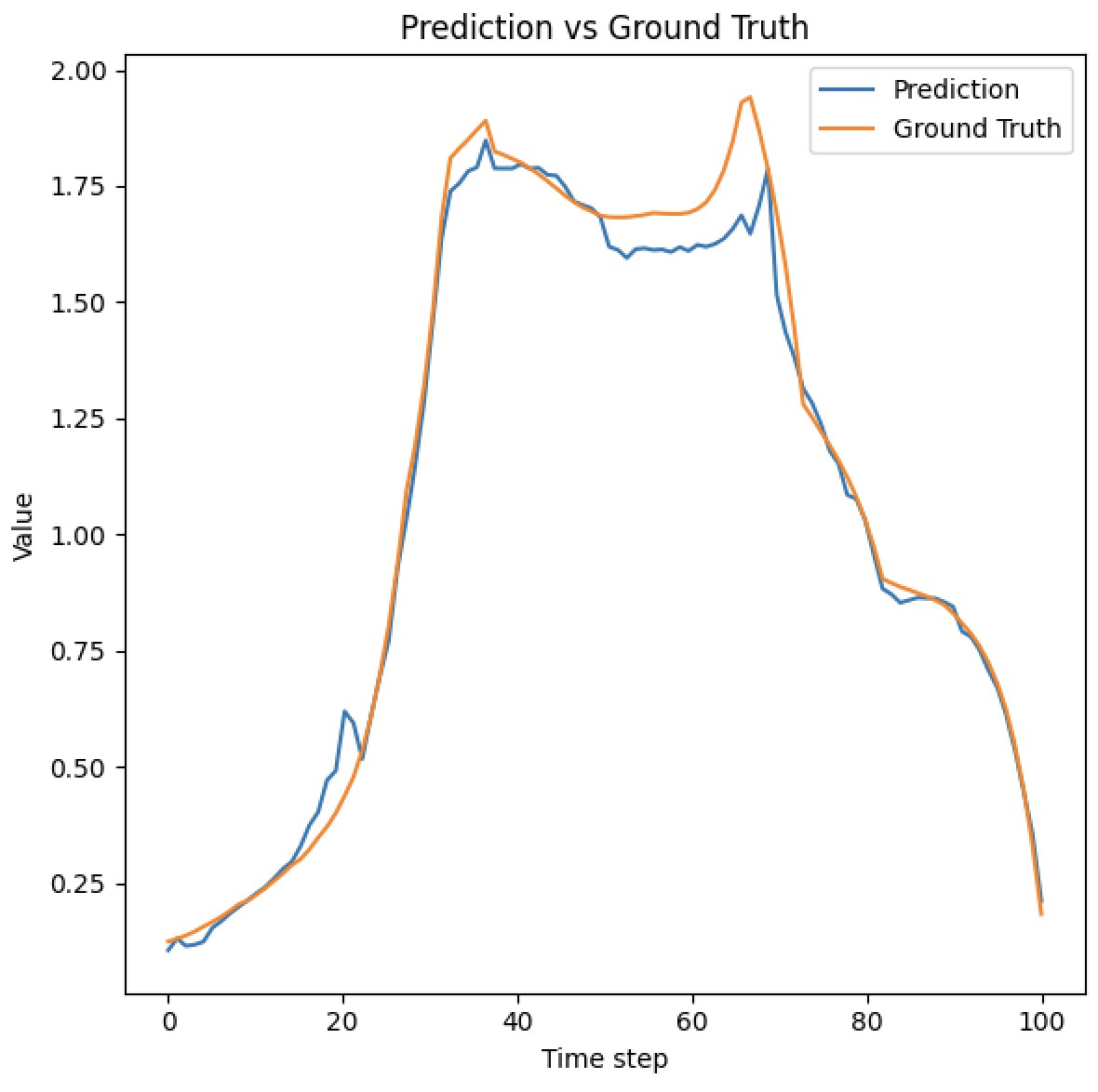}}
    \caption{Performance of Q and V function estimator. On the x-axis is time step of horizon; on the y-axis is the model predictions or ground truth values.}
    \label{fig:q-v-functioner}
\end{figure}

Figure~\ref{fig:q-v-functioner} illustrates the performance of our value function estimators. Notably, the ground truth labels represent the cumulative rewards empirically sampled using Monte Carlo, rather than the in-trajectory cumulative rewards. From the two graphs, we observe that our function estimator effectively learns the empirical distribution of cumulative rewards. Furthermore, the difference between the \(Q\)-function and \(V\)-function estimators provides the advantage function.

\section{Effectiveness of in-context trajectory}\label{sec:inout_traj}
\begin{figure}[h!]
    \centering
    \includegraphics[width=0.5\linewidth]{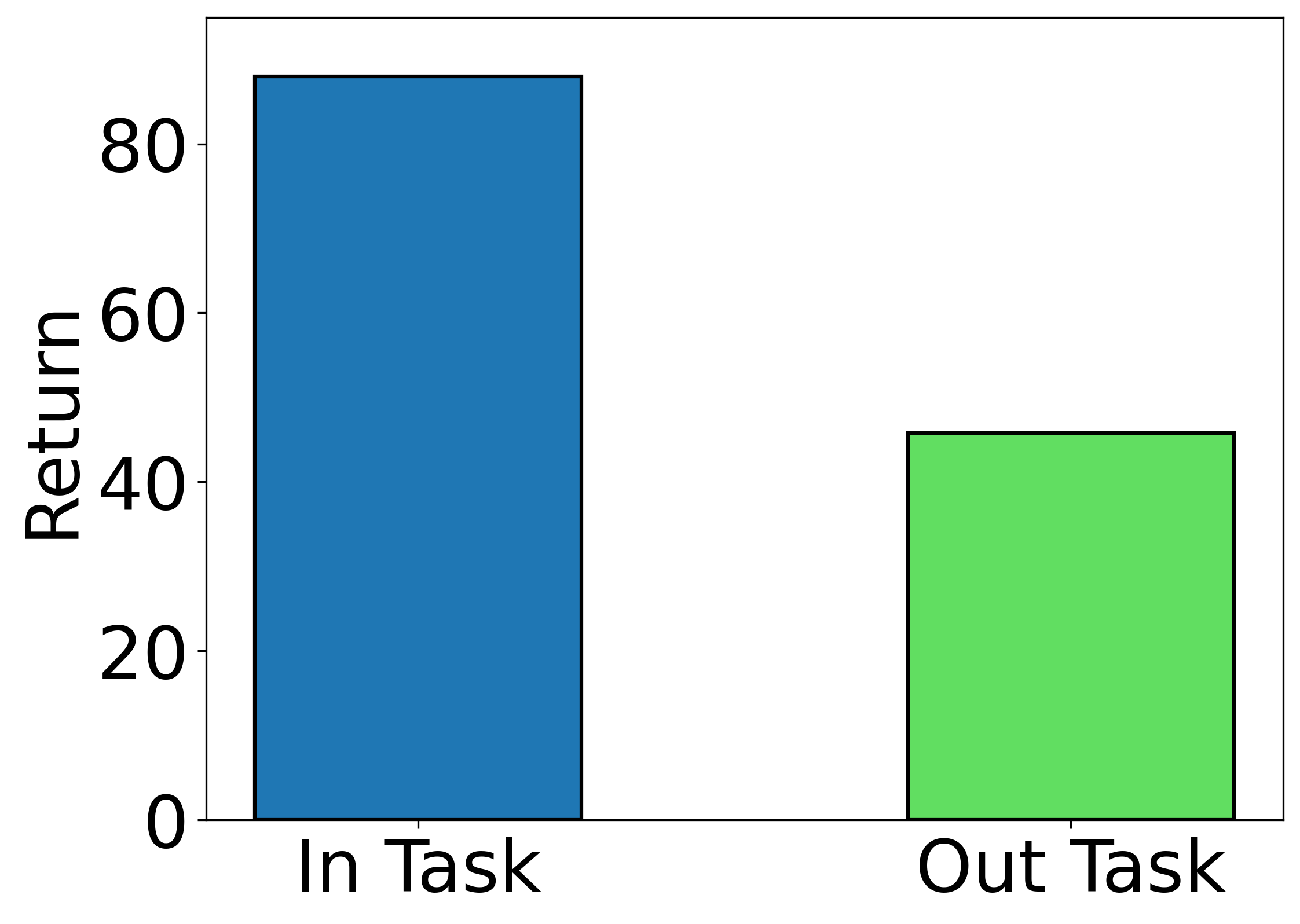}
    \caption{Performance of DIT when the in-context trajectory is aligned (In Task) or misaligned (Out Task) with the current task goal. }
    \label{fig:in-out-task}
\end{figure}

Figure~\ref{fig:in-out-task} illustrates the effectiveness of the in-context trajectory for DIT. Since DIT predicts actions based on the current state and the historical states in the in-context trajectory, it is crucial to ensure that the task goal of the in-context trajectory aligns with the current task that DIT is predicting. Here, "In Task" refers to cases where the in-context trajectory is sampled from the same task as the current task, while "Out Task" indicates that the in-context trajectory is sampled from a different task. 

From Figure~\ref{fig:in-out-task}, we observe that alignment between the in-context trajectory and the current task goal is critical for effective performance. This finding also validates that DIT relies heavily on the in-context trajectory for action prediction, as misalignment with the current task goal leads to a significant decrease in performance.

\end{document}